\documentclass{article}

 \usepackage[preprint]{neurips_2026}

\usepackage[utf8]{inputenc} % allow utf-8 input
\usepackage[T1]{fontenc}    % use 8-bit T1 fonts
\usepackage{hyperref}       % hyperlinks
\usepackage{url}            % simple URL typesetting
\usepackage{booktabs}       % professional-quality tables
\usepackage{amsfonts}       % blackboard math symbols
\usepackage{nicefrac}       % compact symbols for 1/2, etc.
\usepackage{microtype}      % microtypography
\usepackage{xcolor}         % colors

\usepackage{titlesec}
\usepackage{caption}
\usepackage{caption-light}
\usepackage{nicefrac}
\usepackage{bm}
\usepackage{amsmath}
\usepackage{amssymb}
\usepackage{tikz}
\usepackage{pgfplots}
\pgfplotsset{compat=1.18} % Use this to avoid compatibility warnings
\usetikzlibrary{positioning}
\usepackage{hyperref}
\hypersetup{
  hidelinks,
  pdfborder={0 0 0},
}

\usepackage{ifthen}

\usepackage{amsthm}

\usepackage{algorithm}
\usepackage{algpseudocode}

\newtheorem{theorem}{Theorem}

\newtheorem{remark}[theorem]{Remark}

\newtheorem{proposition}[theorem]{Proposition}
\newtheorem{assumption}[theorem]{Assumption}

\newtheorem{example}[theorem]{Example}
\usepackage{mathtools}
\usepackage{amssymb}
\usepackage{ifthen}
\usepackage{algorithm}

\newcommand{\Mat}[1][]{\ifthenelse{\equal{#1}{}}{\textnormal{Mat}}{\textnormal{Mat}(#1)}}

\newcommand{\tangent}[1]{
    \ifthenelse{\equal{#1}{}}
    {{T}}
    {{T_{#1}}}
}
\newcommand{\dualtangent}[1]{
    \ifthenelse{\equal{#1}{}}
    {{T^*}}
    {{T_{#1}^*}}
}

\newcommand{\Adjoint}[1]{
    \ifthenelse{\equal{#1}{}}
    {\textnormal{Ad}}
    {\textnormal{Ad}_{#1}}
}

\newcommand{\adjoint}[1]{
    \ifthenelse{\equal{#1}{}}
    {\textnormal{ad}}
    {\textnormal{ad}_{#1}}
}
\newcommand{\coAdjoint}[1]{
    \ifthenelse{\equal{#1}{}}
    {\textnormal{Ad}^*}
    {\textnormal{Ad}^*_{#1}}
}

\newcommand{\coadjoint}[1]{
    \ifthenelse{\equal{#1}{}}
    {\textnormal{ad}^*}
    {\textnormal{ad}^*_{#1}}
}

\newcommand{\ones}{\mathbf{1}}
\newcommand{\set}[1]{\left\{#1\right\}}

\newcommand{\abs}[1]{\left|#1\right|}

\newcommand{\transpose}{\intercal}

\newcommand{\Matrix}[1]{\mathrm{#1}}
\newcommand{\identity}[1]{
    \ifthenelse{\equal{#1}{}}
    {\Matrix{I}}
    {\Matrix{I}_{#1}}
}

\NewDocumentCommand{\Probability}{o m}{
  \mathsf{P}
  \IfValueT{#1}{_{#1}}
  \IfNoValueT{#1}{}
  \left[#2\right]
}
\NewDocumentCommand{\Expectation}{o m}{
  \mathsf{E}
  \IfValueT{#1}{_{#1}}
  \IfNoValueT{#1}{}
  \left[#2\right]
}
\NewDocumentCommand{\Variance}{o m}{
  \mathsf{Var}
  \IfValueT{#1}{_{#1}}
  \IfNoValueT{#1}{}
  \left[#2\right]
}
\NewDocumentCommand{\Covariance}{o m}{
  \mathsf{Cov}
  \IfValueT{#1}{_{#1}}
  \IfNoValueT{#1}{}
  \left[#2\right]
}

\newcommand{\Gaussian}{\mathcal{N}}

\newcommand{\onehot}[2][]{\mathsf{e}_{#2}\ifthenelse{\equal{#1}{}}{}{^{(#1)}}}

\newcommand{\initial}{0}
\newcommand{\horizon}{T}
\newcommand{\order}{\tau}

\newcommand{\stateProcess}{X}
\newcommand{\observationProcess}{Z}

\newcommand{\control}{u}

\newcommand{\observationModel}{C}
\newcommand{\processNoise}{B}
\newcommand{\observationNoise}{W}
\newcommand{\covarianceProcessNoise}{Q}
\newcommand{\covarianceObservationNoise}{R}

\newcommand{\dualstate}{y}

\newcommand{\covariance}{\Sigma}
\newcommand{\initialMean}{\mu_{\initial}}
\newcommand{\initialCovariance}{\covariance_{\initial}}
\newcommand{\dualCost}{\mathsf{J}}

\def\Re{\mathbb{R}}

\def\Sec#1{Sec.~\ref{#1}}
\def\Fig#1{Fig.~\ref{#1}}

\def\notes#1{\marginpar{\tiny #1}\typeout{Notes!
Notes!
Notes!
}}
\renewcommand{\notes}[1]{\typeout{notes!}}

\def\Re{\field{R}}

\def\Sec#1{Sec.~\ref{#1}}

\def\clP{{\cal P}}
\def\clZ{{\cal Z}}

\def\Sec#1{Sec~\ref{#1}}

\def\E{{\sf E}}

\def\Fig#1{Fig.~\ref{#1}}
\def\Sec#1{Sec.~\ref{#1}}

\def\clZ{{\cal Z}}

\def\beq{\begin{eqnarray}} 
\def\bc{\begin{center}} 
\def\be{\begin{enumerate}}
\def\bi{\begin{itemize}} 
\def\bs{\begin{small}}
\def\bS{\begin{slide}}
\def\ec{\end{center}} 
\def\ee{\end{enumerate}}
\def\ei{\end{itemize}}
\def\es{\end{small}}
\def\eS{\end{slide}}
\def\eeq{\end{eqnarray}}

\def\Re{\mathbb{R}}
\def\E{{\sf E}}

\def\Sec#1{Sec.~\ref{#1}}
\def\Thm#1{Thm.~\ref{#1}}
\def\Prop#1{Prop.~\ref{#1}}

\def\clP{{\cal P}}
\def\clZ{{\cal Z}}

\renewcommand{\Re}{\mathbb{R}}

\newcommand{\sfJ}{{\sf J}}
\def\clN{{\cal N}}

\def\clP{{\cal P}}

\def\clU{{\cal U}}

\def\clZ{{\cal Z}}

\def\E{{\sf E}}
\def\bS{\mathbb{S}}
\def\sJ{{\sf J}}

\def\ones{{\sf 1}}
\def\sP{{\sf P}}

\def\bO{\mathbb{O}}

\def\tp{\intercal}

\def\dvar{\operatorname{var}}

\def\opt{{\text{\rm (opt)}}}

\usepackage[dvipsnames]{xcolor}

\definecolor{illiniorange}{RGB}{255,95,5}
\definecolor{illiniblue}{RGB}{19,41,75}

\usepackage{algorithm}
\usepackage{algpseudocode}

\newtheorem{remark}[theorem]{Remark}

\title{Transformer-like Inference from Optimal Control}

\author{%
  Aditya Kudre\\
  Coordinated Science Laboratory\\
  Electrical and Computer Engineering\\
  University of Illinois Urbana-Champaign\\
  \And
  Heng-Sheng Chang\thanks{Corresponding author email: \texttt{hschang@illinois.edu}}\\
  Coordinated Science Laboratory\\
  Mechanical Science and Engineering\\
  University of Illinois Urbana-Champaign\\
  \And
  Prashant G. Mehta\\
  Coordinated Science Laboratory\\
  Mechanical Science and Engineering\\
  University of Illinois Urbana-Champaign\\
}

\begin{document}

\maketitle

\begin{abstract}
  Decoder-only transformers compute the conditional probability of the next token from a sequence of past observations. 
  This paper derives, from first principles, inference architectures that solve the same prediction problem --- and in doing so, recovers transformer-like layer operations as a consequence of optimal control theory. 
  The framework is developed for two model classes: a nonlinear model of discrete-valued processes, directly motivated by the transformer, and a linear Gaussian model as a tractable baseline. 
  For both model classes, the prediction objective is reformulated as an optimal control problem whose solution yields an explicit inference algorithm, the dual filter, with a layer structure that mirrors the layer structure of a decoder-only transformer.  
  Numerical experiments provide a comparison of the optimal control to attention weights from a trained transformer.  
  These experiments reveal that when the embedding dimension is insufficient, the transformer implicitly exploits non-Markovian structure.
\end{abstract}

\section{Introduction}\label{sec:intro}

A decoder-only transformer computes the conditional probability of the next token through a sequence of $L$ identical layer operations:
\begin{subequations}\label{eq:layer_definition}
  \begin{equation}\label{eq:layer_transformer}
  \text{(Layer for transformer)}\qquad [\sigma_1,\sigma_2,\hdots,\sigma_T]_{d\times T} \mapsto [\sigma_1^+,\sigma_2^+,\hdots,\sigma_T^+]_{d\times T},
  \end{equation}
where each layer maps a $(d\times T)$-sequence of (embedding) 
vectors to a $(d\times T)$-sequence of the same dimension, with $\sigma_t^+$
depending only on $\{\sigma_1,\hdots,\sigma_t\}$ (causal structure),
for $1\leq t\leq T$~\cite{phuong2022formal}.  
The output of the final layer is decoded to give the next-token prediction.
This layer structure is the defining architectural feature of the transformer --- yet its mathematical justification remains unclear. 
Why should the optimal prediction architecture take this particular form? 

This paper offers one answer using optimal control theory. 
% While recent work has utilized optimal control to establish a framework for transformer training \cite{akman2026optimal} and to improve the transformer robustness and efficiency \cite{kan2025optimal}, we apply these principles here to derive the inference architecture and layer operations from first principles.
We adopt a model-based approach where the observed tokens are generated from an underlying hidden stochastic process, and the prediction problem is cast as causal inference in a partially observed system:
\begin{align*}
  \text{(hidden state process)} &\quad X := [X_0,X_1,\hdots,X_{T}]
  \quad \text{taking values in state-space } \bS,\\
  \text{(observation process)} &\quad Z := [Z_1,Z_2\hdots,Z_{T},Z_{T+1}]
  \quad \text{taking values in observation-space } \bO,
\end{align*}
where the joint distribution of $(X,Z)$ is according to the probabilistic graphical model in \Fig{fig:model}. 
We consider two model classes, based on definition of $\bS$ and $\bO$, as described in Table~\ref{tab:lin_non_lin_problem}.

% The interpretation of transformers within a discrete-time dynamical
% system framework is a common perspective in the literature
% \cite{akman2026optimal,cimetiere2025localmax, krishnamurthy2026llms},
% and
The nonlinear model setting is directly motivated by the transformer: $\bO$ is the vocabulary, and $|\bS|=d$ is the embedding dimension. 
The prediction of interest is the conditional probability $\pi_T(\cdot) := \sP(X_T=\cdot \mid Z_{1:T})$, a probability vector of dimension $d$, from which the next-token prediction $\sP(Z_{T+1}=\cdot \mid Z_{1:T})$ is computed. 
The linear Gaussian model serves as a tractable baseline in which exact, closed-form results are available. 
The same notation $(X,Z)$ is used for both model classes; the appropriate class will be clear from context. 

Our objective is to derive transformer-inspired inference architectures where the prediction --- conditional probability for the nonlinear model and conditional expectation for the linear model --- is computed as a causal transformation of $Z_{1:T}$.  
The representations are formally introduced in Table~\ref{tab:lin_non_lin_solution}.

\begin{table}[t]
  \centering
  \renewcommand{\arraystretch}{1.25}
  \setlength{\tabcolsep}{5pt} % default is ~6pt
  \caption{%
    Two model classes: For the nonlinear model, the prediction is the conditional probability of the hidden state.
    For the linear Gaussian model, the prediction is the conditional expectation of the hidden state.
  }
  \begin{tabular}{c c c c c}
    \toprule
    \textbf{Models} & \textbf{State-space} $\bS$ & \textbf{Observation-space} $\bO$ 
      & \textbf{Prediction} & \textbf{Motivation} \\
    \midrule
    Nonlinear & $\{1,2,\hdots,d\}$ & $\{0,1,\hdots,m\}$ & $\pi_T := \sP(X_T|Z_{1:T})$ &  transformer-like \\ 
    % model & & & $x\in\bS$ & transformer \\[2ex]
    Linear & $\Re^d$ & $\Re^m$ & $\hat{X}_T := \E (X_T|Z_{1:T})$ & tractable baseline \\
    % model & & & & (Gaussian process) \\
    \bottomrule
  \end{tabular}\label{tab:lin_non_lin_problem}
\end{table}

\begin{table}[t]
  \centering
  \renewcommand{\arraystretch}{1.5}
  \setlength{\tabcolsep}{5pt} % default is ~6pt
  \caption{%
    Inference architectures for the two model classes: 
    The prediction is computed as a weighted sum of past observations. 
    In the nonlinear predictor, weights are causally adapted to the observation data (i.e., $U_t$ is allowed to depend upon $Z_{1:t}$). 
    This data-dependence is what makes the predictor nonlinear and is the direct analogue of attention weights in a transformer.%
  } 
  \begin{tabular}{c c c}
    \toprule
    & \textbf{Representation} & \textbf{Weights (or control)} \\
    \midrule
    Nonlinear predictor & $\pi_T(f) = \text{(const.)} - \sum_{t=1}^{T} U_{t-1}^\tp e(Z_t)$, & $U=\{U_0,\hdots,U_{T-1}\}$ is an \\
    (for nonlinear model) & for $f:\bS \to \Re$. & $\Re^m$-valued adapted process \\[1ex]
    Linear predictor & $f^\tp \hat{X}_T = \text{(const.)} - \sum_{t=1}^{T} u_{t-1}^\tp Z_t$, & $u=\{u_0,u_1,\hdots,u_{T-1}\}$ is an\\
   (for linear model)  & for $f\in\Re^d$. & $\Re^m$-valued deterministic process\\
    \bottomrule
  \end{tabular}\label{tab:lin_non_lin_solution}
\end{table}

A key finding is that for both model classes, the optimal predictor is computed by iterating a $d\times T$ sequence-to-sequence transformation --- structurally analogous to a transformer layer, but derived here from 
optimality conditions of the prediction problem rather than specified by architecture.

While the representation for the linear predictor is standard for the Gaussian processes, the representation for the nonlinear predictor is an original contribution of this paper (Prop.~\ref{prop:existence_nonlin_predictor_rep} gives well-posedness --- existence and uniqueness --- of $U$).  
The other original contributions are as follows:

\vspace*{-0.1in}
\paragraph{Optimal control framework for computing weights.} 
For both model classes, the synthesis of optimal weights --- $u$ for the linear predictor and $U$ for the nonlinear predictor --- is formulated as an optimal control problem (OCP).

For the linear model, the OCP is solved for the general non-Markovian $X$, yielding a non-recursive algorithm that is provably more efficient than the recursive Kalman filter architecture ($O(T^2d^2)$ vs $O(T^3d^2)$).  
For the nonlinear model, the OCP and explicit formula for $U$ are obtained only for the Markovian $X$, i.e., when $(X,Z)$ is a hidden Markov model (HMM), with complexity $O(T^2d^2)$ --- same as a transformer layer --- independent of vocabulary size $m$.

\vspace*{-0.1in}
\paragraph{Transformer-like layer operation.}
The OCP solution defines a layer transformation on a $d\times T$-sequence --- directly analogous to the transformer. 
For the linear model, the $(d\times T)$-sequence is the momentum process, and the layer transformation is defined using Pontryagin's maximum principle:
\begin{equation}\label{eq:layer_dual_filter_LG}
  \text{(Layer for linear predictor)} \quad [\eta_0,\hdots,\eta_{T-1}]_{d\times T}  \xrightarrow{\;\text{(Hamilton's equation)}\;} [\eta_0^+,\hdots,\eta_{T-1}^+]_{d\times T}.
\end{equation}
At convergence, the layer transformation yields the optimal momentum sequence, from which the prediction weights $u$ are computed via an explicit formula.

For the nonlinear model (analysis is limited to HMM), the layer maps the probability sequence: 
\begin{equation}\label{eq:layer_dual_filter}
  \text{(Layer for nonlinear predictor)} \quad [\rho_1,\hdots,\rho_T]_{d\times T}  \xrightarrow{\;\text{(optimality equation)}\;} [\rho_1^+,\hdots,\rho_T^+]_{d\times T}.
\end{equation}

At convergence, the layer transformation yields the conditional probability of the hidden state at each time step, from which the weights $U$ are computed via an explicit formula. 
For both models, the layer transformation emerges from the optimality conditions rather than being specified by architecture.

\end{subequations}

Numerical experiments provide a direct side-by-side comparison of optimal 
control weights and attention weights from a trained nanoGPT
(Fig.~\ref{fig:control-attention}). 
When the embedding dimension is insufficient, the dual filter 
degrades while nanoGPT retains near-optimal performance --- 
revealing that the transformer implicitly exploits non-Markovian 
structure.

% \vspace*{-0.1in}
\paragraph{Related work.}
Prior work on mathematical modeling of transformers interpret the
architecture as a transport on probability
measures~\citep{geshkovski2023mathematical,geshkovski2024measure,abella2024asymptotic,adu2024approximate,castin2025unified}. Building on these perspectives, optimal control formalisms have been applied to analysis and learning~\citep{akman2026optimal,kan2025optimal,cimetiere2025localmax,krishnamurthy2026llms}. A
complementary perspective is based on Bayesian inference including HMM
and nonlinear filtering
\citep{xie2021explanation,bai2023transformers,rohekar2023causal,alaa2019attentive,tang2021probabilistic,goel2024can,du2023can,chang2025dual}.
The present paper
builds directly on \citep{chang2025dual} but differs from other works,
which focus on interpreting and refining attention mechanisms rather
than modeling the prediction problem from first principles based on
the representations introduced in Table~\ref{tab:lin_non_lin_solution}.

The remainder of this paper is organized as follows: 
The OCPs for the linear and the nonlinear models are described in \Sec{sec:linear} and \Sec{sec:nonlinear}, respectively. 
The numerical experiments are presented in \Sec{sec:numerics} and the paper closes
with conclusions in \Sec{sec:conc}.

% Sec.~\ref{sec:linear} develops the linear Gaussian model, which serves as a tractable setting to introduce the duality framework and optimal control characterization; Sec.~\ref{sec:nonlinear} presents the parallel development for the nonlinear discrete-state model; Sec.~\ref{sec:algorithm} derives the dual filter algorithm and its correspondence to transformer architectures; and Sec.~\ref{sec:numerics} reports numerical experiments.

% The remainder of the paper is organized as follows: Sec.~\ref{sec:model} introduces the model classes and predictor representations, Sec.~\ref{sec:duality} develops the duality-based control formulations, Sec.~\ref{sec:algorithm} presents the dual filter algorithm, and Sec.~\ref{sec:numerics} reports numerical studies.

\begin{figure}[t]
  \centering
  \begin{tikzpicture}[scale=0.8, every node/.style={scale=0.6}, font=\footnotesize]
    \input{figures/model.tex}
  \end{tikzpicture}
  % \vspace*{0.75\baselineskip}
  \caption{%
    Graphical model for $(X,Z)$.  
    For $\tau=1$, $X$ is a Markov process and $(X,Z)$ is a hidden Markov model (HMM).  
    For $\tau>1$, $X$ is a non-Markovian (or a $\tau^{\text{th}}$-order Markov) process. For $\tau=T$, $X_T$ depends upon the entire past $X_{0:T-1}$.%
  }
  %\caption{
  %   Graphical representation of the non-Markovian model with order $\order$ and horizon $T$. 
  %   The state process $X_{0:T}$ is represented by circles and the observation process $Z_{0:T}$ is represented by squares. 
  %   The horizontal arrows (including curved ones) represent the causal and long-range dependencies between states, and the vertical arrows represent the emission of observations from states.
  %   The dashed arrows indicate the prediction task of estimating $Z_T$ given the past observations $Z_{0:T-1}$. 
  % }
  \label{fig:model}
\end{figure}

% Variational inference offers an alternative optimization-based approach to inference in latent variable models, with connections to control theory and potential implications for transformer design \cite{kingma2013auto, rezende2014stochastic, akman2026optimal}.

\section{Linear Gaussian Model}
\label{sec:linear}

Consider the graphical model in~\Fig{fig:model} for the case when $(X,Z)$ is an $\Re^d\times\Re^m$-valued Gaussian process, with state-space representation:
\begin{subequations}
  \begin{align}
    X_{t+1} &= \sum_{s=1}^{\min(\order,t+1)} A_{t+1,s} X_{t+1-s} + B_{t+1},\quad
            0 \leq t \leq T-1, \quad X_0 \sim \clN(\mu_0, \Sigma_0), \label{eq:linear-process}\\
    Z_{t+1} &= C X_{t} + W_{t+1} ,\quad 0\leq t \leq T, \label{eq:linear-observation}
  \end{align}%
  \label{eq:linear-model}%
\end{subequations}
where $\order$ is the model-order parameter, $C \in \Re^{m\times d}$ is the observation matrix, and $A_{\cdot,\cdot} \in \Re^{d\times d}$ are the state transition matrices. 
Stochasticity is introduced through three mutually independent sources: 
(1) the Gaussian initial condition $X_0 \sim \clN(\mu_0, \Sigma_0)$, 
(2) the white Gaussian noise (WGN) process $B_t \sim \clN(0, Q)$ with $Q\succeq 0$, and
(3) the WGN process $W_t \sim \clN(0, R)$ with $R\succ 0$, for $1\leq t\leq T$.

The goal is to compute the conditional expectation  $\hat{X}_T := \E(X_T \mid Z_{1:T})$, from which the next-step prediction $\E(Z_{T+1} | Z_{1:T}) = C \hat{X}_T$ is readily computed.

For the fully non-Markovian case (with $\tau=T$), a Kalman filter
needs to maintain an estimate, conditional mean and covariance, of the
entire history $\{X_0,X_1,\hdots,X_t\}$ at each time
$t$~\cite[Ch.~8.2]{bar2001estimation}.    
The size of this data structure grows with $t$ --- $O(td)$ for the
conditional mean and $O(t^2d^2)$ for the conditional covariance ---
leading to $O(T^3d^2)$ total complexity over the time-horizon (Table~\ref{tab:complexity}). 
This motivates the question: is a more efficient algorithm possible with a fixed $d\times T$ data structure, as in a transformer? 
The answer is affirmative --- the OCP formulation below yields such an algorithm.

\subsection{Optimal control problem (OCP) for computing weights}

The OCP is formulated as follows. 
Consider the space of admissible control inputs as 
\begin{equation*}
 \text{(admissible control)} \qquad \clU^{\text{(det)}} :=\{ u_t: u_t \in \Re^m,
  0\leq t\leq T-1 \}.
  \label{eq:admissible-control-linear}
\end{equation*}
For any given $f\in\Re^d$ and a deterministic control sequence $u\in\clU^{\text{(det)}}$, define the $\Re^d$-valued backward process $y=\{ y_t: y_t \in \Re^d, 0\leq t\leq T\}$ by
\begin{equation}
 \text{(dual control system)} \qquad y_t = \sum_{s=1}^{\min(\order,T-t)} A_{t+s,s}^\tp y_{t+s} + C^\tp u_t,
  \quad 0\leq t\leq T-1, \qquad y_T=f,
  \label{eq:linear-dual-backward}
\end{equation}
and the associated optimal-control cost functional
\begin{equation*}
  \label{eq:linear-cost}
 \text{(optimal control objective)} \qquad   \sJ_T(u;f) :=
 \frac{1}{2}\abs{y_0}_{\Sigma_0}^2 + \frac{1}{2} \sum_{t=0}^{T-1}
 \left( |y_{t+1}|_Q^2 + |u_t|_R^2\right).
\end{equation*}

\begin{proposition}[Duality principle for non-Markovian linear Gaussian model]
  Consider an estimator
  \begin{equation*}
    S_T := \mu_0^\tp y_0 - \sum_{t=1}^{T} u_{t-1}^\tp Z_{t},
    \label{eq:linear-estimator}
  \end{equation*}
  where $y_0$ is obtained from~\eqref{eq:linear-dual-backward} with $u \in\clU^{\text{(det)}}$ and $y_T=f$.
  Then
  \begin{equation}
    \sJ_T(u;f) = \E \left(\frac{1}{2}\abs{f^\tp X_T - S_T}^2\right).
    \label{eq:linear-duality}
  \end{equation}\label{prop:duality_LG}
\end{proposition}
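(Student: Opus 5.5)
Our plan is to write the error $f^\tp X_T - S_T$ as a linear combination of the three independent Gaussian sources $X_0-\mu_0$, $\{B_t\}$, $\{W_t\}$, with coefficients given by the dual process $y$. The claimed identity then follows from independence and the covariance formulas $\E(|a^\tp B_t|^2)=|a|_Q^2$, $\E(|a^\tp W_t|^2)=|a|_R^2$, and $\E(|a^\tp (X_0-\mu_0)|^2)=|a|_{\Sigma_0}^2$, where $|a|_M^2:=a^\tp M a$.

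The key step is a discrete summation-by-parts (telescoping) identity. Using \eqref{eq:linear-process}, we will show
\[
f^\tp X_T - y_0^\tp X_0 = \sum_{t=0}^{T-1}\big(y_{t+1}^\tp B_{t+1} + u_t^\tp C X_t\big).
\]
To get there, we start from $\sum_{t=0}^{T-1}\big(y_{t+1}^\tp X_{t+1} - y_t^\tp X_t\big) = f^\tp X_T - y_0^\tp X_0$. We substitute $X_{t+1}=\sum_s A_{t+1,s}X_{t+1-s}+B_{t+1}$ into the first term, and substitute \eqref{eq:linear-dual-backward} for $y_t$ into the second term. After this, the double sums $\sum_{t}\sum_{s} y_{t+1}^\tp A_{t+1,s}X_{t+1-s}$ and $\sum_t\sum_{s} y_{t+s}^\tp A_{t+s,s}X_t$ must cancel. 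We verify this by reindexing: set $r=t+1$ in the first sum to get the pairs $(r,s)$ with $1\le r\le T$ and $1\le s\le\min(\tau,r)$, and set $r=t+s$ in the second sum. Both then range over the same set of pairs $(r,s)$ with $X_{r-s}$ multiplied by $y_r^\tp A_{r,s}$. This index bookkeeping, in particular checking that the upper limits $\min(\tau,t+1)$ and $\min(\tau,T-t)$ describe the same index set, is the main place care is needed. The $C^\tp u_t$ part of $y_t$ leaves the $u_t^\tp C X_t$ terms.

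Next, we use \eqref{eq:linear-observation} to write $u_{t-1}^\tp Z_t = u_{t-1}^\tp C X_{t-1} + u_{t-1}^\tp W_t$. The $u^\tp CX$ terms then cancel against the sum above, leaving
\[
f^\tp X_T - S_T = y_0^\tp (X_0-\mu_0) + \sum_{t=0}^{T-1}\big(y_{t+1}^\tp B_{t+1} + u_t^\tp W_{t+1}\big).
\]
Since $u$ is deterministic, $y$ is deterministic, so every coefficient is nonrandom. Squaring and taking expectation, the cross terms vanish by mutual independence and zero mean of the sources, including whiteness across time. The three remaining groups of terms give $|y_0|_{\Sigma_0}^2 + \sum_{t=0}^{T-1}\big(|y_{t+1}|_Q^2+|u_t|_R^2\big)$. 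Halving yields \eqref{eq:linear-duality}.
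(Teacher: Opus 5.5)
Your argument is essentially the paper's proof. You derive the pairing identity $f^\tp X_T - S_T = y_0^\tp(X_0-\mu_0) + \sum_{t=1}^{T} y_t^\tp B_t + \sum_{t=0}^{T-1} u_t^\tp W_{t+1}$ from the state dynamics and the dual system, then square it and use the zero mean and mutual independence of the three noise sources. Your explicit reindexing of the two double sums is the bookkeeping the paper leaves implicit, and it is done correctly.

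There is one sign slip to fix. The $C^\tp u_t$ term enters through the subtracted term $-y_t^\tp X_t$, so the intermediate identity should read $f^\tp X_T - y_0^\tp X_0 = \sum_{t=0}^{T-1}\big(y_{t+1}^\tp B_{t+1} - u_t^\tp C X_t\big)$. With the $+$ sign you wrote, the $u_t^\tp C X_t$ terms would double against the $+\sum_t u_t^\tp Z_{t+1}$ coming from $-S_T$ instead of cancelling. Your final error expression is already the correct one.
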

\begin{proof}
See \Sec{appx:proof-duality-linear} in the appendix.
\end{proof}

The right-hand side of~\eqref{eq:linear-duality} is the mean-squared error (MSE). 
The duality principle relates the MSE to an optimal control objective: the control $u$ that minimizes $\sJ_T(u;f)$ yields the MMSE estimator, which for Gaussian processes equals the conditional mean $f^\tp\hat{X}_T$.
This motivates the following optimal control problem:
\begin{equation}
  \text{(OCP)} \qquad \min_{u \in\clU^{\text{(det)}}} \sJ_T(u;f) 
  \quad \text{subject to \eqref{eq:linear-dual-backward}}
  \label{eq:linear-optimal-control-problem}
\end{equation}

\begin{theorem}[Optimal control formula]\label{thm:optimal-control}
  Consider the OCP~\eqref{eq:linear-optimal-control-problem}.  
  Then there exists a unique optimal control sequence $u^\opt\in\clU^{\text{(det)}}$, given by the following forward-backward system:
  \begin{subequations}
   \label{eq:linear-fb-system}
    \begin{align}
      \text{(backward)}&& y_t &= \sum_{s=1}^{\min(\order,T-t)} A_{t+s,s}^\tp y_{t+s} + C^\tp u_t^\opt,\quad y_T=f,\label{eq:linear-fb-backward}\\
      \text{(forward)}&& \eta_t &= \sum_{s=1}^{\min(\order,t)} A_{t,s}\eta_{t-s} + Q y_t,\quad \eta_0=\Sigma_0 y_0,\label{eq:linear-fb-forward}\\
      \text{(optimal control formula)}&& u_t^\opt &= -R^{-1}C \eta_t,\quad t=0,1,\hdots,T-1.
      \label{eq:linear-fb-control}
    \end{align}  
  \end{subequations}\label{Thm:OCP_LG}
\end{theorem}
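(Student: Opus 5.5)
The plan is to treat \eqref{eq:linear-optimal-control-problem} as a finite-dimensional, strictly convex quadratic program and read off its first-order conditions. The backward recursion \eqref{eq:linear-dual-backward} is linear in $(u,f)$, so each $y_t$ is an affine function of $u=(u_0,\hdots,u_{T-1})\in\Re^{mT}$: $y_t = \Phi_t f + \sum_{s} \Psi_{t,s} u_s$. Substituting into $\sJ_T(u;f)$ gives a quadratic in $u$. Its Hessian is $\succeq \operatorname{diag}(R,\hdots,R)\succ 0$, because the terms $|y_0|^2_{\Sigma_0}$ and $|y_{t+1}|_Q^2$ contribute positive semidefinite parts. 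So $\sJ_T$ is strictly convex and coercive, and a unique minimizer $u^\opt$ exists; it is characterized by the vanishing gradient. This settles existence and uniqueness. It remains to show that the stationarity condition is exactly \eqref{eq:linear-fb-system}.

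To compute the gradient, I will use a Lagrangian (discrete Pontryagin) argument. Take a perturbation $u+\epsilon v$, and let $\delta y$ solve the variational system
\[
\delta y_t = \sum_{s=1}^{\min(\order,T-t)} A_{t+s,s}^\tp \delta y_{t+s} + C^\tp v_t,\qquad \delta y_T = 0 .
\]
The directional derivative is then
\[
\delta \sJ = y_0^\tp \Sigma_0 \delta y_0 + \sum_{t=0}^{T-1}\big( y_{t+1}^\tp Q\, \delta y_{t+1} + u_t^\tp R v_t\big).
\]
Define $\eta$ by the forward recursion \eqref{eq:linear-fb-forward}. The key step is a summation-by-parts identity showing that the state-dependent terms collapse to $\sum_{t=0}^{T-1} \eta_t^\tp C^\tp v_t$:
\[
y_0^\tp\Sigma_0\delta y_0 + \sum_{t=1}^{T} y_t^\tp Q\,\delta y_t = \sum_{t=0}^{T-1}\eta_t^\tp C^\tp v_t .
\]
Given this, $\delta\sJ=\sum_t (C\eta_t + R u_t)^\tp v_t$. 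Setting it to zero for all $v$ gives $u_t^\opt=-R^{-1}C\eta_t$, which is \eqref{eq:linear-fb-control}.

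The main obstacle is verifying this identity with the non-Markovian multi-lag sums and their truncation limits $\min(\order,T-t)$ and $\min(\order,t)$. I would expand $\sum_{t=0}^{T-1}\eta_t^\tp \delta y_t$ in two ways, once using the recursion for $\delta y_t$ and once using the recursion for $\eta_t$, and reindex the double sum $\sum_t\sum_s \eta_t^\tp A_{t+s,s}^\tp\delta y_{t+s}$ via $t'=t+s$. The pairs $(t,s)$ with $0\le t$, $1\le s\le \min(\order,T-t)$ correspond exactly to pairs $(t',s)$ with $t'\le T$ and $1\le s\le\min(\order,t')$, so the transition terms cancel. What remains are boundary contributions: the term at $t'=T$ vanishes since $\delta y_T=0$, and the initial term $\eta_0=\Sigma_0y_0$ produces $y_0^\tp\Sigma_0\delta y_0$. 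There is one index subtlety to check: the forward recursion for $\eta_t$ carries the term $Qy_t$ at time $t$, while the cost uses $|y_{t+1}|_Q^2$. This accounts for $Q$-terms for $t=1,\hdots,T-1$, and the term at $t=T$ again drops because $\delta y_T=0$. For $t=0$ the definition $\eta_0=\Sigma_0y_0$ with no $Qy_0$ term matches the cost.

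Finally, $\eta$ depends on $y$, which in turn depends on $u^\opt$. So \eqref{eq:linear-fb-system} is a coupled two-point boundary value problem, and it should be read as the necessary and sufficient optimality condition. Sufficiency follows from convexity, and uniqueness of its solution follows from uniqueness of the minimizer.
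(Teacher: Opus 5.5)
Your proposal is correct and follows essentially the same argument as the paper. Both perturb around the optimum, use the forward momentum process $\eta$ to reduce the cross terms of the first-order variation to $\sum_t (C\eta_t + R u_t)^\tp \Delta u_t$, and appeal to strict convexity for uniqueness. Your version is somewhat more complete than the paper's on two points. First, you carry out explicitly the reindexing of the truncated lag sums, including the $t=0$ boundary where $\eta_0=\Sigma_0 y_0$ replaces the recursion. Second, you establish existence of the minimizer from the Hessian bound $\succeq \operatorname{diag}(R,\hdots,R)\succ 0$, whereas the paper simply assumes an optimal control exists from the outset.
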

\begin{proof}
  See \Sec{appx:proof-optimal-control-linear} in the appendix. 
\end{proof}

\Thm{thm:optimal-control} justifies equation~\eqref{eq:layer_dual_filter_LG} for the layer operation introduced in \Sec{sec:intro}. 
The momentum process $\eta:=\{\eta_t:0\leq t\leq T-1\}$ is a $d\times T$ data structure, and a single backward-forward pass of~\eqref{eq:linear-fb-system}  defines the layer transformation $\eta\mapsto\eta^+$ in~\eqref{eq:layer_dual_filter_LG}.
Complexity comparison with the Kalman filter appears in Table~\ref{tab:complexity}. 
We refer to the algorithm as the dual filter.

\vspace*{-0.1in}
\paragraph{Related work.}
State space models such as Mamba invoke a duality between recurrent and convolutional representations to derive efficient sequential architectures \citep{gu2023mamba}. 
The duality in this paper is of a different character: it is between optimal filtering and optimal control~\cite{kim2019duality,kim2023duality}, yielding inference architectures from first principles rather than from architectural design.  
For the Markovian case ($\tau=1$), the duality is classical and referred to
as Kalman's or minimum-variance
duality~\cite[p.~180]{bensoussan2018estimation}~\cite[p.~100]{kailath2000linear},~\cite{todorov2008general}. See~\cite{kim2025arrow}
for a historical overview of duality in control theory.

This linear analysis sets the stage for the nonlinear model presented next.

\begin{table}[t]
  \centering
  \renewcommand{\arraystretch}{1.25}
  \setlength{\tabcolsep}{6pt}
  \caption{Complexity of the inference algorithm for the non-Markovian (model order
    $\order=T$) linear Gaussian model~\eqref{eq:linear-model}.}
  \label{tab:complexity}
  \medskip
  \begin{tabular}{lccc}
    \toprule
    \textbf{Algorithm} & \textbf{Time} & \textbf{Memory} & \textbf{Data structure} \\
    \midrule
    Recursive Kalman filter & $O(T^3 d^2)$ & $O(T^2 d^2)$ & Growing mean and covariance matrices \\
    Dual filter (non-recursive) & $O(T^2 d^2)$ & $O(T d)$ & Fixed $d\times T$ \\
    Transformer (per layer)     & $O(T^2 d^2)$  & $O(T d)$    & Fixed $d\times T$ \\
    \bottomrule
  \end{tabular}
  \medskip
\end{table}

%%%%%%%%%%%%%%%%%%%%%%%%%%%%%%%%%%%%%%%%%%%%%%%%%%%%%%%%%%%%

\section{Nonlinear Model}\label{sec:nonlinear}

Consider the graphical model in \Fig{fig:model} with discrete state-space $\bS=\{1,2,\hdots,d\}$ and observation-space $\bO=\{0,1,2,\hdots,m\}$ (see Table~\ref{tab:lin_non_lin_problem}).  
The next observation $Z_{t+1}$ depends on the past only through the current state $X_t$, with conditional distribution
\begin{subequations}\label{eq:HMM_model}
\begin{equation}\label{eq:HMM_C_model}
  \sP(Z_{t+1}=z | X_t=x) = C(x,z),\quad x\in\bS,\quad z\in\bO,\quad 0\leq t \leq T-1,
\end{equation}
where $C \in \Re^{d\times (m+1)}$ is a row stochastic matrix.  
The column vector $C(\cdot,z)$ is the probabilistic analogue of the embedding vector for token $z$ in a transformer.

The prediction target is the  conditional probability $\pi_T(x):=\sP(X_T = x \mid Z_{1:T})$ for  $x\in\bS$.  
The next-token prediction is then readily computed as
\begin{equation*}
  \sP(Z_{T+1}=z \mid Z_{1:T}) = \pi_T(C(\cdot,z)) := \sum_{x\in\bS} 
  \pi_T(x)\, C(x,z), \quad z\in\bO.
  \label{eq:next-token-prediction}
\end{equation*}
This computation is the probabilistic analogue of the un-embedding operation in a transformer.  

The main result of this section is the nonlinear predictor
representation for $\pi_T$, formally introduced in Table~\ref{tab:lin_non_lin_solution}, whose well-posedness is established in \Prop{prop:existence_nonlin_predictor_rep} below.  
The representation involves a mapping $e:\bO\to\Re^m$ defined as 
follows:
\begin{equation*}
  e(1) = \begin{bmatrix} 1 \\ 0 \\ \vdots \\ 0 \end{bmatrix}_{m\times 1}, e(2)
= \begin{bmatrix} 0 \\ 1\\ \vdots\\ 0 \end{bmatrix}_{m\times 1}, \hdots
\quad, e(m)
= \begin{bmatrix} 0 \\ 0 \\ \vdots \\ 1 \end{bmatrix}_{m\times 1},  e(0) =
-e(1) -e(2) - \hdots -e(m).
\end{equation*}
Here $e(z)$ for $z=1,2,\hdots,m$ is the one-hot encoding, and $e(0)$ is chosen so that $\sum_{z\in\bO} e(z) = 0$. 

\begin{example}[m=1]
  Suppose the observations are binary-valued, i.e., $\bO =
  \{0,1\}$.  Then
  \[
e(1) = 1,\quad e(0) = -1.
\]
\end{example}

In the nonlinear setting, the space of admissible weight sequences is
\[
\clU := \{U_t \in \Re^m : U_t \text{ is measurable w.r.t. } 
\clZ_t,\; 0\leq t\leq T-1\},
\]
where $\clZ_t := \sigma(\{Z_s : 1\leq s\leq t\})$ is the sigma-algebra generated by observations up to time $t$ with $\clZ_0$ the trivial sigma-algebra.  
In words, $U_t$ is required to depend only on past observations $Z_1, Z_2, \hdots, Z_t$ --- a causal dependence condition analogous to the causal masking in a decoder-only transformer. 
Note the contrast with $\clU^{\text{(det)}}$: 
the weights are now random (data-dependent) rather than deterministic, which is what makes the predictor nonlinear.

\begin{proposition}[Nonlinear predictor representation]
\label{prop:existence_nonlin_predictor_rep}
Fix $f:\bS\to\Re$. There exists $U\in\clU$ such that
\begin{equation*}
\label{eq:nonlin_predictor_rep}
\pi_T(f) = \text{(const.)} - \sum_{t=1}^{T} U_{t-1}^\transpose e(Z_{t}),
\quad \sP\text{-a.s.}.
\end{equation*}
If $\sP(Z_1=z_1,\hdots,Z_T=z_T)>0$ for all $(z_1,\hdots,z_T)\in\bO^T$, 
then $U$ is unique.
\end{proposition}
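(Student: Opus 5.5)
The plan is to read the representation as a martingale (Doob) decomposition of the random variable $\pi_T(f)$ with respect to the observation filtration $\{\clZ_t\}$. Throughout I take the constant to be $\E(f(X_T)) = \E(\pi_T(f))$.

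\textbf{Step 1: telescoping.} Write
\[
\pi_T(f) - \E(\pi_T(f)) = \sum_{t=1}^T \Delta_t, \qquad \Delta_t := \E(\pi_T(f)\mid\clZ_t) - \E(\pi_T(f)\mid\clZ_{t-1}).
\]
Here $\pi_T(f)$ is $\clZ_T$-measurable, and $\clZ_0$ is trivial. Each increment $\Delta_t$ is a function $g_t(Z_{1:t-1},Z_t)$ with $\E(\Delta_t\mid\clZ_{t-1})=0$.

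\textbf{Step 2: express each increment through $e(Z_t)$.} Since $Z_t$ takes values in the finite set $\bO$, fix $z_{1:t-1}$ and view $g_t(z_{1:t-1},\cdot)$ as a vector in $\Re^{m+1}$. The vectors $\{e(z)\}_{z\in\bO}$ span $\Re^m$. The linear map $V\mapsto (V^\tp e(z))_{z\in\bO}$ from $\Re^m$ into $\Re^{m+1}$ is injective, because $e(1),\dots,e(m)$ are independent. Its image is the $m$-dimensional subspace $\{h:\sum_z h(z)=0\}$, since $\sum_z e(z)=0$.

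The conditional-mean-zero property is not the constraint $\sum_z h(z)=0$. It says $\sum_z p(z)h(z)=0$, where $p(z)=\sP(Z_t=z\mid Z_{1:t-1}=z_{1:t-1})$. To bridge this gap, add a $\clZ_{t-1}$-measurable constant $c_t$ to each increment. Every $h\in\Re^{m+1}$ decomposes as $h = \bar h\,\ones + (h-\bar h\ones)$ with $\bar h$ the plain average, and the second part lies in the image. Hence
\[
g_t(z_{1:t-1},z) = c_t(z_{1:t-1}) - U_{t-1}(z_{1:t-1})^\tp e(z),
\]
where $U_{t-1}$ is obtained by solving $U^\tp e(z) = \bar h - h(z)$. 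Concretely $U_{t-1}(i) = h(0)-h(i)$ for $i=1,\dots,m$, because $U^\tp e(0) = -\sum_i U(i)$ and $U^\tp e(i) = U(i)$. Thus $U_{t-1}$ is $\clZ_{t-1}$-measurable.

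\textbf{Step 3: eliminate the leftover constants.} This is where I expect the main obstacle. Summing Step 2 over $t$ gives
\[
\pi_T(f) = \E(\pi_T(f)) + \sum_t c_t - \sum_t U_{t-1}^\tp e(Z_t).
\]
The quantity $\sum_t c_t$ is random, since $c_t$ depends on $Z_{1:t-1}$, so it must be absorbed.

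My approach is to avoid the martingale increments and instead decompose $\pi_T(f) = F(Z_1,\dots,Z_T)$ directly, backward in time. Write $F(z_{1:T}) = a_T(z_{1:T-1}) - U_{T-1}(z_{1:T-1})^\tp e(z_T)$ using the Step 2 decomposition in the last variable, with $a_T$ the average over $z_T$. Then decompose $a_T$ as a function of $z_{T-1}$ in the same way, and iterate down to a constant. This is an Efron–Stein/ANOVA-type decomposition with respect to uniform averaging rather than the conditional law, and it yields exactly the stated representation with a deterministic constant. Measurability holds because $U_{t-1}$ depends only on $z_{1:t-1}$.

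To make this rigorous when some sequences have zero probability, define $F$ arbitrarily on null sequences. The identity then holds $\sP$-a.s., which gives existence.

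\textbf{Step 4: uniqueness.} Suppose two representations hold when all sequences have positive probability. Their difference gives
\[
\text{const} = \sum_t D_{t-1}(z_{1:t-1})^\tp e(z_t) \quad \text{for every } z_{1:T}.
\]
Varying $z_T$ with $z_{1:T-1}$ fixed, the function $z\mapsto D_{T-1}^\tp e(z)$ must be constant. Since it sums to zero over $\bO$, it is identically zero, and injectivity gives $D_{T-1}=0$. Backward induction then gives $D_t=0$ for all $t$, and the constants also agree.
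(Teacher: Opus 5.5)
Your Step 3 is the paper's argument: decompose $F(z_{1:T})$ in its last variable using the uniform average over $\bO$ (Prop.~\ref{prop:linear_algebra}), iterate backward, and let Step 4 spell out the uniqueness the paper attributes to uniqueness of that decomposition. The proof is correct and essentially the same as the paper's. There are two slips, and neither affects validity, since existence rests on your image/dimension argument: the weight should be $U_{t-1}(i)=\bar h-h(i)$ (the paper's $U_{T-1}(i)=-(S(i)-S_{T-1})$), not $h(0)-h(i)$, which is off by a factor of $2$ already for $m=1$; and the constant your Step 3 produces is the uniform average $(m+1)^{-T}\sum_{z\in\bO^T}F(z)$, not $\E(\pi_T(f))$ as you announce at the outset. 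By your own Step 4 these two constants differ in general, so drop that opening claim, which is a leftover from the martingale route you abandon.
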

\begin{proof}
See \Sec{appx:well-posedness-nonlin-predictor-rep} in the appendix.
\end{proof}

The nonlinear predictor representation in~\Prop{prop:existence_nonlin_predictor_rep} reduces the prediction problem to the synthesis of the weight sequence $U\in\clU$. 
As in the linear Gaussian model, this synthesis is formulated as an optimal control problem. 
In the following, the results are described for the Markovian case ($\tau=1$), i.e., when $(X,Z)$ is an HMM. 
Unlike the linear Gaussian model where the OCP is solved for the general non-Markovian model, an explicit formula for $U$ is obtained here only for $\tau=1$.

\subsection{Optimal control problem (OCP) for computing weights}
\label{sec:docp_nonlinear}

The remainder of this section is closely adapted from~\cite{chang2025dual}. 
We make the following assumption: 
\begin{assumption}[Hidden Markov Model]
The pair $(X,Z)$ is a hidden Markov model (HMM) with emission matrix $C$ as defined in~\eqref{eq:HMM_C_model}, and state transition:
\begin{equation}\label{eq:HMM_A_model}
\sP(X_0=x) = \mu(x),~\sP(X_{t+1}=x' \mid X_t=x) = A(x,x'), \quad x,x'\in\bS,\quad 
t=0,1,\hdots,T-1,
\end{equation}
where $A\in\Re^{d\times d}$ is a row stochastic matrix.
\end{assumption}
\end{subequations}

The optimal control formulation for the HMM parallels that of the linear Gaussian model in \Sec{sec:linear}, with the  key objects summarized in Table~\ref{tab:nonlinear-objects}.  
For the HMM, the dual control system is a backward stochastic
difference equation (BS$\Delta$E) whose solution is denoted by $(Y,V)$ where $Y$ is a stochastic process whose linear analogue is the deterministic dual process $y$.  
The auxiliary process $V$ has no analogue in the linear case; the process enforces adaptedness of $Y$ to the filtration $\clZ_t$.  
The explicit formulae for these and the optimal control objective
$\sJ_T(U;f)$ appear in \Sec{appx:nonlinear} of the appendix.

\begin{table}[t]
  \centering
  \caption{Key objects in the nonlinear OCP and their linear 
    Gaussian analogues.  }
  \label{tab:nonlinear-objects}
  \medskip
  \begin{tabular}{cccc}
    \toprule
    \textbf{Object} & \textbf{Definition} & \textbf{Role} & 
    \textbf{Linear analogue} \\
    \midrule
    % $\clU$ & Causally adapted sequences & Admissible weights & 
    % $\clU^{\text{(det)}}$ \\[1ex]
    $U\in\clU$ & $\Re^m$-valued adapted process & Control/weights & 
                                                                  $u\in\clU^{\text{(det)}}$ \\[1ex]
        $f:\bS\to\Re$ & Function on $\bS$ & Prediction target is $\pi_T(f)$ & 
    $f\in\Re^d$ \\[1ex]
    $(Y,V)$ & Solution of BS$\Delta$E & Dual process & 
                                                       $y$ (backward \\[1ex]
                    & (with $U$, $Y_T=f$) & & process)
   \\[1ex]
    $\sJ_T(U;f)$ & Optimal control objective &  MSE of estimator $S_T$ & 
    $\sJ_T(u;f)$ \\
    \bottomrule
  \end{tabular}
  \smallskip
\end{table}

\begin{proposition}[{Duality principle for HMM~\cite[{Theorem 9}]{chang2025dual}}]
\label{prop:duality-nonlinear}
Consider an estimator
\begin{equation*}
S_T := \mu(Y_0) - \sum_{t=0}^{T-1} U_t^\tp e(Z_{t+1}),
\label{eq:nonlinear-estimator}
\end{equation*}
where $Y_0$ is obtained from the BS$\Delta$E~\eqref{eq:dual_BSDE} with $U\in\clU$ and $Y_T=f$. Then
\begin{equation*}
\sJ_T(U;f) = \E\left(\abs{f(X_T) - S_T}^2\right).
\label{eq:nonlinear-duality}
\end{equation*}
Consequently, minimizing $\sfJ_T(U;f)$ over $U\in\clU$ yields 
the MMSE estimator $\pi_T(f)$.
\end{proposition}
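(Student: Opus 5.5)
The plan is a telescoping/martingale-difference argument, the nonlinear analogue of the proof of \Prop{prop:duality_LG}. Throughout, write $\mathcal{F}_t := \sigma(X_{0:t}, Z_{1:t})$ for the joint filtration of state and observations. The key object is the process $Y_t(X_t)$, where $Y_t$ is the (random, $\clZ_t$-measurable) function on $\bS$ produced by the BS$\Delta$E~\eqref{eq:dual_BSDE} with terminal condition $Y_T=f$. I will use the structure of the BS$\Delta$E stated in \Sec{appx:nonlinear}: it expresses $Y_t$ through $A$, $C$, the pair $(Y_{t+1},V_{t+1})$ and the control $U_t$. The auxiliary term $V_{t+1}$ is chosen so that $Y_t$ stays $\clZ_t$-measurable.

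\textbf{Step 1 (telescoping).} Since $Y_T(X_T)=f(X_T)$, I write
\[
f(X_T) - S_T = \big(Y_0(X_0) - \mu(Y_0)\big) + \sum_{t=0}^{T-1} \Delta_{t+1},
\qquad \Delta_{t+1} := Y_{t+1}(X_{t+1}) - Y_t(X_t) + U_t^\tp e(Z_{t+1}).
\]
This is an algebraic identity. $Y_0$ is deterministic because $\clZ_0$ is trivial, so the first term has zero mean under $X_0\sim\mu$.

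\textbf{Step 2 (martingale differences).} I will show $\E(\Delta_{t+1}\mid\mathcal{F}_t)=0$. Given $\mathcal{F}_t$, the pair $(X_{t+1},Z_{t+1})$ has law $A(X_t,\cdot)\otimes C(X_t,\cdot)$ by the graphical model in \Fig{fig:model}. The quantities $Y_{t+1}$, $V_{t+1}$ are $\clZ_{t+1}$-measurable, so they are functions of $Z_{t+1}$ given $\clZ_t$, and one can write
\[
Y_{t+1} = \text{(part independent of } Z_{t+1}) + V_{t+1}^\tp e(Z_{t+1}).
\]
Using this, the conditional expectation of $Y_{t+1}(X_{t+1}) + U_t^\tp e(Z_{t+1})$ given $X_t=x$ is an explicit expression in $A$, $C(x,\cdot)$, $Y_{t+1}$, $V_{t+1}$ and $U_t$. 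The BS$\Delta$E is precisely the statement that this expression equals $Y_t(x)$, so I expect this step to be a direct check.

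\textbf{Step 3 (orthogonality and cost).} Martingale differences are orthogonal in $L^2$. The initial term is $\mathcal{F}_0$-measurable with mean zero and is orthogonal to all $\Delta_{t+1}$. Hence
\[
\E|f(X_T)-S_T|^2 = \dvar_\mu(Y_0) + \sum_{t=0}^{T-1}\E\big[\E(\Delta_{t+1}^2\mid\mathcal{F}_t)\big].
\]
I then compute each conditional variance as a quadratic form in $(Y_{t+1},V_{t+1},U_t)$ evaluated at $X_t$. The tower property replaces $X_t$ by its conditional law, and matching term by term gives the running cost of $\sJ_T(U;f)$ from the appendix.

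\textbf{Main obstacle and consequence.} I expect the bookkeeping in Step 3 to be the hardest part. The difficulty is getting the conditional variance of $Y_{t+1}(X_{t+1}) + U_t^\tp e(Z_{t+1})$ jointly over the independent draws of $X_{t+1}$ and $Z_{t+1}$. This includes handling the constraint $\sum_z e(z)=0$, which makes the $e$-coordinates a basis for mean-zero functions of $Z_{t+1}$. If the appendix writes the cost with expectations under $\pi_t$ rather than under the law of $X_t$, I will insert $\E(\cdot\mid\clZ_t)$ to convert.

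For the final claim, $S_T$ is $\clZ_T$-measurable, so $\E|f(X_T)-S_T|^2 \geq \E|f(X_T)-\pi_T(f)|^2$. By \Prop{prop:existence_nonlin_predictor_rep}, $\pi_T(f)$ itself has the form $S_T$ for some $U\in\clU$, so the bound is attained. Therefore a minimizer of $\sJ_T$ yields $S_T=\pi_T(f)$ a.s.
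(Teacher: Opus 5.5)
Your proposal is correct and is essentially the argument behind the cited \cite[Theorem~9]{chang2025dual}, the nonlinear counterpart of the pairing identity in \Sec{appx:proof-duality-linear}. Substituting the BS$\Delta$E at $x=X_t$ gives $\Delta_{t+1}=\big(Y_{t+1}(X_{t+1})-(AY_{t+1})(X_t)\big)+(U_t+V_t(X_t))^\tp\big(e(Z_{t+1})-c(X_t)\big)$, and these two conditionally uncorrelated martingale increments have conditional second moments $(\Gamma Y_{t+1})(X_t)$ and $(U_t+V_t(X_t))^\tp R(X_t)(U_t+V_t(X_t))$, which is exactly the running cost $l(Y_{t+1},V_t,U_t;X_t)$. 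Two small fixes are needed: in the paper it is $V_t$ (which is $\clZ_t$-measurable) that multiplies $e(Z_{t+1})$, as the coefficient in $AY_{t+1}$ rather than in $Y_{t+1}$; and in the final step the constant in \Prop{prop:existence_nonlin_predictor_rep} equals $\mu(Y_0)$ because your Steps 1--2 give $\E(S_T)=\E(f(X_T))$ for every $U\in\clU$, while also $\E(\pi_T(f))=\E(f(X_T))$.
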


The duality principle reduces weight synthesis to the following optimal control problem:
\begin{equation}
\text{(OCP)} \qquad \min_{U\in\clU} \sfJ_T(U;f) \quad 
\text{subject to BS$\Delta$E \eqref{eq:dual_BSDE} in \Sec{appx:nonlinear}}.
\label{eq:nonlinear-OCP}
\end{equation}
Denote the conditional probability of the 
hidden state $X_t$ at time $t$ as $\pi_t(x):=\sP(X_t=x\mid Z_{1:t})$, 
$x\in\bS$, for $1\leq t\leq T$.  The solution of~\eqref{eq:nonlinear-OCP} yields a fixed-point 
representation of the conditional probability process 
$\pi=\{\pi_t:1\leq t\leq T\}$, as described in the following theorem. 

\begin{theorem}[{Formula for optimal control~\cite[{Theorem 11}]{chang2025dual}}]\label{thm:optimal-solution}
	Consider~\eqref{eq:nonlinear-OCP}.  
  Then an optimal control $U^\opt =\{U_t^\opt:0\leq t\leq T-1\}$ is of the feedback form given by
  \begin{subequations}
	  \begin{equation}\label{eq:optimal_control_formula}
  \text{(optimal control law)} \quad     U_t^\opt = \phi(Y_t,V_t;\pi_t),\quad \sP\text{-a.s.},\quad 0\leq t\leq T-1.
\end{equation}
(See equation \eqref{eq:control-nonlinear} in \Sec{appx:nonlinear} for an explicit form for $\phi:\Re^d\times\Re^{m\times d}\times\Re^d
\to\Re^m$).  Suppose 
$(Y^\opt,V^\opt)$ is the solution of the BS$\Delta$E with $U=U^\opt$. Then 
		\begin{align}
   %               \text{(fixed-point representation)} \quad
                  \pi_s (Y_s^\opt) = \mu (Y_0^\opt) - \sum_{t=1}^{s} (U_{t-1}^\opt)^\tp e(Z_{t}),\quad \sP\text{-a.s.},\quad 1\leq s\leq T.
      \label{eq:estimator-t}
		\end{align}
	\end{subequations}
      \end{theorem}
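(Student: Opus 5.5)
The plan is to prove the theorem by \emph{verification} rather than by deriving first-order conditions. I will exhibit a closed-loop control of the form $\phi(Y_t,V_t;\pi_t)$ and show that it produces the telescoping identity~\eqref{eq:estimator-t}. Evaluated at $s=T$ with $Y_T^\opt=f$, that identity gives $S_T=\pi_T(f)$. By \Prop{prop:duality-nonlinear}, $\sJ_T(U;f)=\E(|f(X_T)-S_T|^2)$ for every $U\in\clU$. Since $S_T$ is $\clZ_T$-measurable, this is bounded below by $\E(|f(X_T)-\pi_T(f)|^2)$. The candidate attains the bound, so it is optimal. Thus the whole weight of the proof falls on constructing $U^\opt$ and proving the one-step identity.

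\textbf{Step 1 (one-step identity).} For $0\le t\le T-1$ I would show that, along the closed loop,
\[
\pi_{t+1}(Y_{t+1}^\opt)-\pi_t(Y_t^\opt) = -(U_t^\opt)^\tp e(Z_{t+1}),\quad \sP\text{-a.s.}
\]
Summing from $t=0$ to $s-1$ then gives~\eqref{eq:estimator-t}, using the convention $\pi_0=\mu$. Four ingredients enter here:
\begin{itemize}
\item the filter recursion $\pi_{t+1}(x)\propto\sum_{x'}\pi_t(x')A(x',x)C(x,Z_{t+1})$;
\item the BS$\Delta$E of \Sec{appx:nonlinear}, which expresses $Y_t$ through $A Y_{t+1}$, the martingale-difference term $V_{t+1}^\tp e(Z_{t+1})$ and the control;
\item the fact that $Y_t$, $V_{t+1}$ and $U_t$ are $\clZ_t$-measurable, so that conditionally on $\clZ_t$ only the finitely many values $Z_{t+1}=z\in\bO$ matter;
\item the decomposition $Y_{t+1}=\E(Y_{t+1}\mid\clZ_t)+V_{t+1}^\tp e(Z_{t+1})$.
\end{itemize}
Fix $\clZ_t$. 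The left-hand side is then a function of $z\in\bO$, and so is the right-hand side. The right-hand side ranges over the $m$-dimensional span of $\{e(z)\}$, which has the constraint $\sum_z e(z)=0$ built in. I would therefore split the identity into two parts. First, a ``mean'' condition: a suitable conditional average over $z$ of the left-hand side vanishes. I expect this to follow from the drift term of the BS$\Delta$E combined with the filter recursion, i.e.\ it is the content of the duality (\Prop{prop:duality-nonlinear}). Second, the remaining $m$ degrees of freedom, which are matched exactly by choosing $U_t$. That choice defines $\phi$, and I would check it against~\eqref{eq:control-nonlinear}. The result is affine in $(Y_t,V_t)$ for fixed $\pi_t$, as the form of $\phi$ requires.

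\textbf{Step 2 (well-posedness of the closed loop).} Substituting $U_t=\phi(Y_t,V_t;\pi_t)$ into the BS$\Delta$E yields a backward equation that is linear in $(Y,V)$ with $\clZ_t$-adapted coefficients. I would solve it backward from $Y_T=f$ one step at a time. At each step, the finiteness of $\bO$ gives the representation of $Y_{t+1}$ as a $\clZ_t$-measurable part plus $V_{t+1}^\tp e(Z_{t+1})$, with $V_{t+1}$ $\clZ_t$-measurable. Since the equation is linear with no circular dependence, the step-by-step backward solve goes through. The resulting $U^\opt$ then lies in $\clU$.

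\textbf{Main obstacle.} I expect the hardest step to be the mean condition in Step~1. This is where the Bayes normalization of $\pi_{t+1}$ (the division by $\sum_x(A^\tp\pi_t)(x)C(x,Z_{t+1})$) must cancel against the precise form of the BS$\Delta$E drift. My first approach would be to multiply through by this normalizer and compare coefficients of $C(\cdot,z)$ for each $z$. If that fails, I would fall back on deducing the mean condition indirectly, by taking conditional expectations in the duality identity of \Prop{prop:duality-nonlinear} over a single time step.
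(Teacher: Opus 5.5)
Your skeleton is sound. If the closed loop satisfies $\pi_{t+1}(Y_{t+1})-\pi_t(Y_t)=-U_t^\tp e(Z_{t+1})$, telescoping gives~\eqref{eq:estimator-t}, hence $S_T=\pi_T(f)$. Then \Prop{prop:duality-nonlinear} and the MMSE property of $\pi_T(f)$ give optimality. The paper itself offers no argument here; it imports \cite[Theorem~11]{chang2025dual}. So this one-step identity is where your proof must stand on its own.

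The gap is that the ingredients you list for it are not those of this model, and with them the identity is false.
\begin{enumerate}
\item Here $Z_{t+1}$ is emitted by $X_t$ (see~\eqref{eq:HMM_C_model}), so the filter is ``update, then propagate'': on $\{Z_{t+1}=z\}$, $\pi_{t+1}(x)=p_z^{-1}\sum_{x'}\pi_t(x')C(x',z)A(x',x)$ with $p_z:=\pi_t(C(\cdot,z))$. Your recursion and the normalizer $\sum_x(A^\tp\pi_t)(x)C(x,z)$ belong to the other timing convention. They weight $Y_{t+1}$ by $C$ at the destination state, and the BS$\Delta$E says nothing about that quantity: it only constrains $(AY_{t+1})(x)$ jointly with $c(x)$ at the same $x$.
\item The BS$\Delta$E term is $V_t^\tp(x)e(Z_{t+1})$, with $V_t$ $\clZ_t$-measurable and $x$-dependent. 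It comes from splitting $(AY_{t+1})(x)$ by the uniform average of \Prop{prop:linear_algebra}. It is not $Y_{t+1}=\E(Y_{t+1}\mid\clZ_t)+V_{t+1}^\tp e(Z_{t+1})$. Nor is it a martingale difference under $\sP$, since $\E(e(Z_{t+1})\mid\clZ_t)=\pi_t(c)\neq0$.
\end{enumerate}

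With the correct ingredients the step takes a few lines, and your ``main obstacle'' disappears. Substituting the BS$\Delta$E into $p_z\,\pi_{t+1}(Y_{t+1})=\sum_x\pi_t(x)C(x,z)(AY_{t+1})(x)$ gives, on $\{Z_{t+1}=z\}$,
\[
\Delta(z):=p_z\bigl[\pi_{t+1}(Y_{t+1})-\pi_t(Y_t)+U_t^\tp e(z)\bigr]
=\sum_{x\in\bS}\pi_t(x)\,C(x,z)\bigl[Y_t(x)-\pi_t(Y_t)+(e(z)-c(x))^\tp\bigl(U_t+V_t(x)\bigr)\bigr].
\]
Use $\sum_z C(x,z)e(z)=c(x)$ and $\sum_z C(x,z)\,e(z)(e(z)-c(x))^\tp=R(x)$. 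Then $\sum_z\Delta(z)=0$ for every $U_t$, so the mean condition is automatic. Note that it holds with the predictive weights $p_z$ applied to the defect, not with the uniform average suggested by $\sum_z e(z)=0$. Meanwhile $\sum_z e(z)\Delta(z)=\pi_t((c-\pi_t(c))Y_t)+\pi_t(R)U_t+\pi_t(RV_t)$. Since $1,e_1(\cdot),\dots,e_m(\cdot)$ span the functions on $\bO$, the identity holds iff this vector vanishes.

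The real work therefore lies where you moved quickly.
\begin{enumerate}
\item $Y_t$ contains $c^\tp U_t$, so $U_t=\phi(Y_t,V_t;\pi_t)$ is implicit, contrary to ``no circular dependence'' in Step~2. Eliminating $Y_t$ leaves a linear system for $U_t$ with matrix $\pi_t(R)+\pi_t\bigl((c-\pi_t(c))c^\tp\bigr)=\mathrm{Cov}(e(Z_{t+1})\mid\clZ_t)$.
\item That solution is guaranteed to match the pseudo-inverse formula when $\pi_t(R)\succ0$ (e.g.\ $C>0$ entrywise), but not in general. With deterministic emissions $\pi_t(R)=0$, and the displayed $\phi$ vanishes identically. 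You need to state such an assumption.
\item The computation gives $+\pi_t(RV_t)$, whereas the formula for $\phi$ in \Sec{appx:nonlinear} has $-\rho(Rv)$. For $d=1$ the identity reduces to $(U_t+V_t)^\tp(e(z)-c)=0$, forcing $U_t=-V_t$, while the displayed $\phi$ returns $V_t$. So your check against the display will fail by a sign, and the error is in the display, not in your derivation.
\end{enumerate}
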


\Thm{thm:optimal-solution} justifies the layer 
transformation~\eqref{eq:layer_dual_filter} introduced in 
\Sec{sec:intro}.  Eq.~\eqref{eq:estimator-t} shows that the
optimal control $U^\opt$ yields a causal representation of the
conditional probability process $\pi$: At a query time-step $s$, the
prediction $\pi_s$ is computed in terms of the weights
$\{U_{t-1}^\opt:1\leq t\leq s\}$ (see \Fig{fig:control-attention}).  Moreover, since $U_t^\opt =
\phi(Y_t,V_t;\pi_t)$ depends on $\pi_t$
through~\eqref{eq:optimal_control_formula},
equation~\eqref{eq:estimator-t} is a fixed-point representation of
$\pi$.  This fixed-point structure defines a natural layer 
transformation: given an approximation $\rho\approx\pi$ as input, 
one pass through the BS$\Delta$E, using the control 
law~\eqref{eq:optimal_control_formula} with $\rho_t$ instead of
$\pi_t$, produces an updated approximation $\rho^+$ as output.

\vspace*{-0.1in}
\paragraph{Related work.}
Relative to~\cite{chang2025dual}, the novel contributions of this
paper are: (i) the nonlinear predictor representation
(\Prop{prop:existence_nonlin_predictor_rep}), which is
proved here for the general non-Markovian model; and (ii) the parallel treatment of the nonlinear and linear Gaussian
models within a unified optimal control framework. The explicit formula for $U$ is given only for the Markovian 
case ($\tau=1$), based on the original derivation of the same in~\cite{chang2025dual}. The numerical experiments in \Sec{sec:numerics} 
suggest that extending the framework to the non-Markovian case 
($\tau > 1$) is both important and feasible --- a transformer implicitly solves this problem through attention, and the dual filter framework provides a mathematical foundation for a principled extension.

\section{Numerical Experiments}
\label{sec:numerics}

The observation process $Z=\{Z_t:1\le t\le T\}$ takes values in $\bO=\{0,1\}$ (i.e. binary observations $m=1$) and is constructed by concatenating two types of deterministic patterns:
\[
z^{\text{long}}=\{1,1,\underbrace{0,\ldots,0}_{(d-2) \; \text{zeros}}\}, 
\qquad
z^{\text{short}}=\{1,\underbrace{0,\ldots,0}_{q\text{ zeros}}\},
\]
where $d$ is the length of the long pattern and $q<d-2$ is the number
of zeros in the short pattern.  At the end of each cycle, the next
pattern type (long or short) is selected independently with
probability $0.5$, and the corresponding pattern is appended.

The stochastic process $Z$ is a $d$-th order Markov process: 
The conditional probability of $Z_t$ depends on the position within the current cycle, which in the worst case (a long cycle) requires knowledge of up to the past $d$ observations.  
A minimal HMM representation on state-space $\bS=\{1,2,\hdots,d\}$ is
depicted in \Fig{fig:model-observation} together with a sample path of
$Z$ showing the two types of patterns.

\begin{remark}
The choice of two patterns is deliberate: all predictive information
is carried by the $1$s, which are sparse and always surrounded by 
a long string of zeros. This raises two questions: whether a fully 
trained nanoGPT focuses its attention weights on the $1$s, and 
how those attention weights compare structurally to the optimal 
control weights.
\end{remark}

We have three goals in this study:
\begin{enumerate}
\item Evaluate the attention weights using a decoder-only transformer
  (nanoGPT) with its embedding dimension set to $d$.
  
  \item Evaluate the optimal control weights using the dual filter 
with the true HMM parameters, and compare their structure to 
the attention weights.

\item Evaluate the effect of the candidate state dimension $\hat{d}$ 
  on inference performance. The HMM parameters are learned via 
  Baum-Welch for each $\hat{d}$, covering the undercomplete 
  ($\hat{d} < d$), matched ($\hat{d} = d$), and overcomplete 
  ($\hat{d} > d$) regimes.
  
\end{enumerate}

In the numerical experiments reported below, we fix $d=16$, $q=4$, and
$T=64$. (Results are qualitatively similar for other values of $d$ and 
$T$, as described in \Sec{appdx:high_dim}.)  The 
performance is assessed using the cross-entropy loss of next-token
prediction. Because the data is generated from a known HMM, the optimal 
(minimum) loss equals the conditional entropy under the true 
nonlinear filter, which serves as the benchmark throughout.

\subsection{Comparison of attention and control weights}

% A nanoGPT \cite{karpathy2024nanogpt} with a single attention head, single layer, and the embedding dimension set to $d$ achieves nearly the optimal loss (see
% \Fig{fig:loss} in Appendix \ref{appx:numerics} for the loss over iterations and a discussion of learning settings.)   

A nanoGPT \cite{karpathy2024nanogpt} with one layer, one attention head, and embedding dimension $d$ achieves near-optimal loss (see Appendix \ref{appx:numerics} for training details).

In the model-based setting, the optimal control weights are computed
using the formula derived in \Sec{sec:nonlinear}.
\Fig{fig:control-attention} depicts a side-by-side comparison of the attention weights and
the control weights.  
Qualitatively, both sets of weights show a sparse pattern which is non-zero at the time indices when $1$s are seen in the data.
There is a simple explanation for sparsity and structure of the control weights.  
The control is non-zero only following those time-indices $t$ such that
$X_t=d$ with associated emission $Z_{t+1}=1$.   
At the branching point, the filter splits equally between the two
possible next states: the filter $\pi_{t+1}(x) =
\frac{1}{2}(\delta_{1}(x) + \delta_{d-q}(x))$, yielding a non-zero value $U_{t+1}^\opt$.  
At all other time-indices, the filter has its probability mass only on one state with associated control as 0.      

These results demonstrate that the transformer has learned,
without any knowledge of the HMM, a weighting structure that
qualitatively resembles the optimal control weights derived from first
principles.   
The more interesting regime is when the embedding dimension is smaller than the true (Markovian) state dimension.  
This is the subject of the following experiment.

\begin{figure}[t]
  \vspace*{-1.25\baselineskip}
  \centering
  \includegraphics[width=0.95\linewidth]{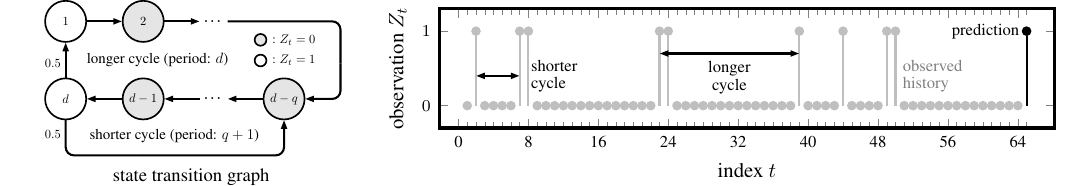}
  % \vspace*{1pt}
  \caption{Two-cycle HMM and a sample observation sequence.
    (left) State transition graph: state $d$ branches 
    with equal probability into a long cycle or a 
    short cycle, merging at state $d-q$ before 
    returning to state $d$. States $1$ and $d$ emit $Z_t=1$; 
    all others emit $Z_t=0$.
    (right) Sample trajectory with observed history indicated as gray and the prediction target as black.
  }
  \label{fig:model-observation}
\end{figure}

\begin{figure}[t]
  \vspace*{-0.75\baselineskip}
  \centering
  \includegraphics[width=0.9\linewidth]{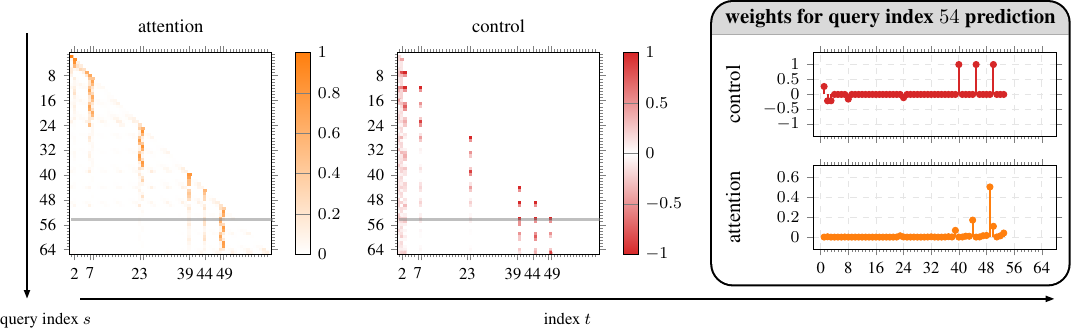}
\caption{%
    Attention and control weight heatmaps for $d=\hat{d}=16$ and $T=64$. 
    (left) Attention matrix from a trained nanoGPT: sparse, concentrated at time indices where $Z_t=1$.
    (middle) Dual filter control weights computed with exact HMM parameters: same sparsity pattern, non-zero only where $Z_t=1$.
    (right) Slice at the query step $s=54$: control (top) and attention (bottom) both focus on time indices where $Z_t=1$, for $1\leq t\leq s$.
    The corresponding query step is highlighted in the attention and control heatmaps with a horizontal gray background.
    The major ticks in the index axis of the heatmaps correspond to the time indices where $Z_t=1$; the minor ticks correspond to the time indices where $Z_t=0$.
}
  \label{fig:control-attention}
  \vspace*{-0.5\baselineskip}
\end{figure}

\subsection{Effect of dimension $\hat{d}$ (non-Markovian advantage)}

\begin{figure}[t]
  \vspace*{-0.25\baselineskip}
  \centering
  \includegraphics[width=0.9\linewidth]{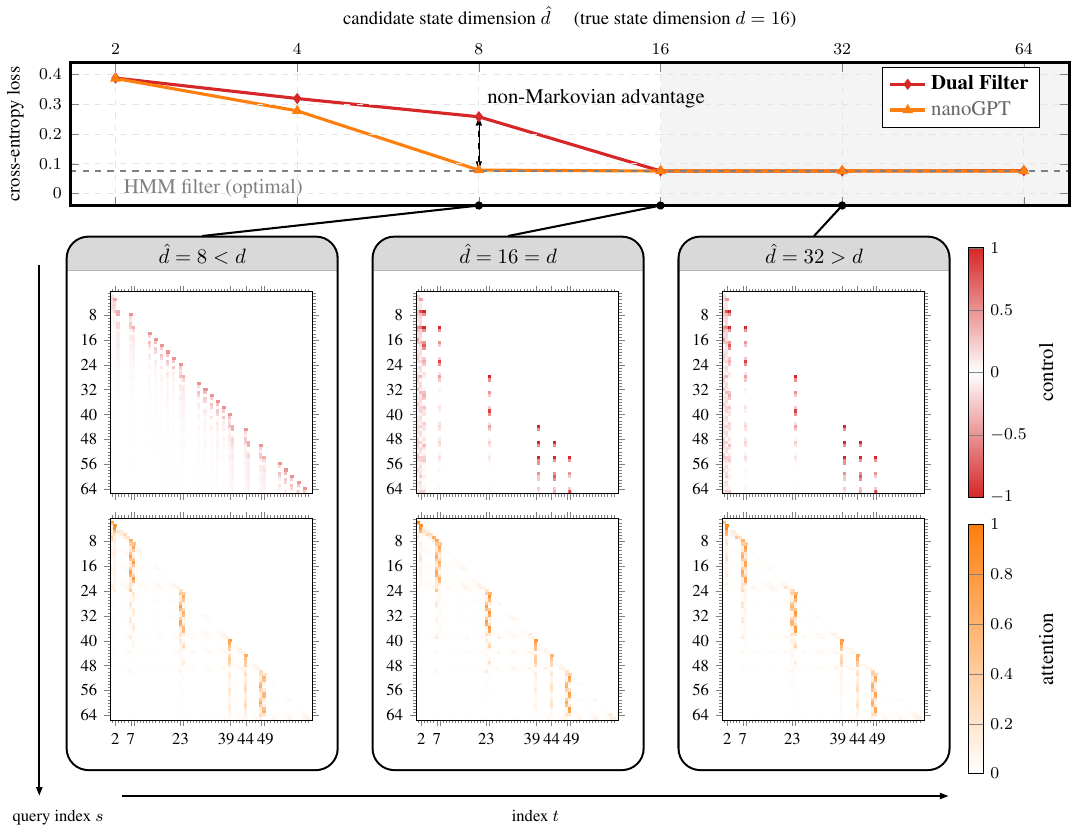}
 \caption{Non-Markovian advantage.
  (top) Cross-entropy loss vs. $\hat{d}$ for the dual filter and nanoGPT; dashed line is the HMM filter.
  (bottom) Control and attention heatmaps at $\hat{d} \in \{8, 16, 32\}$; color intensity indicates weight magnitude.
  The major ticks in the index axis correspond to the time indices where $Z_t=1$; the minor ticks correspond to the time indices where $Z_t=0$.
  When $\hat{d} < d$, controls degrade while attentions remain focused on $Z_t=1$, revealing the non-Markovian advantage.
}
  \label{fig:nma}
\end{figure}

In this experiment, the performance of the dual filter and nanoGPT are
evaluated as the candidate state dimension $\hat{d}$ is varied. 
The true state dimension is fixed at $d=16$, and $\hat{d}$ is tested
both below and above $d$.  
For each $\hat{d}$, including for $\hat{d}=d$, the HMM parameters $(A,C)$ are estimated from data using the Baum-Welch algorithm; 
the same dataset is used to train nanoGPT.

% For $\hat{d} \geq d$, both methods achieve nearly optimal performance. 
% The interesting regime is $\hat{d} < d$: the dual filter's Markovian 
% model (for the hidden state) cannot capture the full temporal structure of the observation 
% sequence, leading to significant performance degradation. 
% In contrast, nanoGPT retains near-optimal performance even at 
% $\hat{d}=\nicefrac{d}{2}$, implicitly exploiting non-Markovian 
% structure through its use of embedding and the attention mechanism.

For $\hat{d} \geq d$, both methods achieve near-optimal performance. 
The more interesting regime is $\hat{d} < d$, where the dual filter's Markovian model fails to capture the full temporal structure of the observations, resulting in performance degradation. 
In contrast, nanoGPT maintains near-optimal performance even at $\hat{d}=\nicefrac{d}{2}$, implicitly leveraging non-Markovian structure through its attention mechanism.

\Fig{fig:nma} shows the attention weights at three representative values: $\hat{d}\in\{8,16,32\}$.
Across all three values, the attention weights consistently focus on time indices where $Z_t=1$, identifying the informative observations regardless of model misspecification.
This gap --- the dual filter degrades while nanoGPT does not --- is precisely the \emph{non-Markovian advantage}~\cite{chen2026differentiable}: the transformer implicitly operates beyond the Markovian regime, motivating the extension of the dual filter to the non-Markovian nonlinear case as a direction for future work.

\subsection{Sparsity of attention and control weights}

% Transformer architectures are designed to produce sparse attention 
% weights, but is sparsity optimal? In the two-cycle HMM, the 
% emission is deterministic.  The filter collapses to a point mass 
% except at the branching state, so the optimal control weights are 
% sparse with a vertical pattern --- non-zero only at time indices
% immediately following when $X_t = d$.

% A perturbation study (\Sec{appx:robust}) reveals a qualitative change
% in the optimal control weights: even with small perturbations in the
% HMM parameters $(A,C)$, they shift to a sparse but diagonal pattern, aggregating 
% evidence across multiple past cycles rather than relying on a single branching moment (\Fig{fig:robust}). In contrast, 
% the nanoGPT attention weights remain sparse and vertical throughout --- a 
% fixed pattern that nonetheless tracks the optimal loss of the dual filter 
% closely across perturbations.

Transformer architectures are designed to produce sparse attention weights, but whether sparsity is optimal remains unclear. 
In the two-cycle HMM, deterministic emissions cause the filter to collapse to a point mass except at the branching state, yielding sparse optimal control weights with a vertical pattern — nonzero only immediately after $X_t=d$.

A perturbation study reveals a qualitative shift in the optimal
control structure (\Sec{appx:robust} in Appendix). 
Even small perturbations to the transition parameter $A$ induce a
sparse but diagonal pattern, aggregating evidence across multiple past
cycles rather than relying on a single branching moment
(\Fig{fig:robust} in \Sec{appx:robust}).
In contrast, the nanoGPT attention weights remain sparse and vertical across perturbations, while still closely matching the optimal loss of the dual filter.

We caution against over-interpreting weight structure: performance and weight pattern are distinct.
In the non-Markovian advantage of \Fig{fig:nma}, the dual filter's degradation in the undercomplete regime is a capacity issue, not a consequence of the attention weight structure.  
On the other hand, \Fig{fig:robust} shows that the transformer achieves near-optimal loss even in the perturbed case. 
Whether sparse attention is what enables nanoGPT to robustly exploit non-Markovian structure remains an open question.

\section{Conclusion}
\label{sec:conc}

This paper derives transformer-like inference architectures from 
first principles using optimal control theory. For two model 
classes --- a linear Gaussian model and a nonlinear discrete-valued 
model --- the prediction objective is reformulated as an optimal 
control problem whose solution defines a $d\times T$ 
sequence-to-sequence transformation, structurally analogous to a 
transformer layer, emerging from optimality conditions rather than 
specified by architecture.

Numerical experiments provide a direct comparison of optimal 
control weights and attention weights from a trained nanoGPT. 
In the matched regime ($\hat{d}=d$), the two weight structures 
are qualitatively aligned --- both sparse and focused on the 
informative observations. 
In the undercomplete regime ($\hat{d} < d$), the dual 
filter degrades while nanoGPT retains near-optimal performance 
--- the non-Markovian advantage --- revealing that the transformer 
implicitly exploits non-Markovian structure that the Markovian 
dual filter cannot represent.  A perturbation study
(\Sec{appx:robust}) further shows that the sparsity of the optimal
control weights is model-dependent rather than algorithmic. 

Several limitations point to directions for future work. First, the explicit optimal 
control formula is obtained only for the Markovian case ($\tau=1$) 
in the nonlinear model. Second, the framework is entirely model-based: 
the parameters $(A, C, \mu)$ are assumed known, and developing 
a learning framework to estimate either these parameters or control
weights directly from data is ongoing work.  

The non-Markovian advantage identifies a precise gap: the Markovian
dual filter is insufficient, and a non-Markovian nonlinear extension
is needed --- the linear Gaussian results of \Sec{sec:linear} and the
dual filter framework provide the mathematical foundation for this
next step. 

% This paper develops a duality-based optimal control framework for understanding and designing predictors in non-Markovian models, with a particular focus on the linear Gaussian and nonlinear HMM cases. 
% The key insight is that the prediction problem can be reformulated as an optimal control problem, where the control inputs correspond to the weights in a predictor representation. 
% This framework not only provides a theoretical foundation for yielding practical algorithms, such as the dual filter, but also offers a lens through which to interpret the attention mechanisms in transformers as implicitly learning an optimal control strategy for inference.
% The numerical experiments on a synthetic two-cycle HMM illustrate the alignment between the transformer's attention and the dual filter's optimal control, as well as the robustness of the transformer's attention to model misspecification, demonstrating the practical relevance of the theoretical insights. 
% Overall, this work bridges the gap between optimal control theory and modern sequence modeling, providing a first principles approach to designing predictors in complex, non-Markovian settings.

\bibliographystyle{unsrt}
\bibliography{references}

@book{kailath2000linear,
  title={Linear estimation},
  author={Kailath, Thomas and Sayed, Ali H and Hassibi, Babak},
  year={2000},
  publisher={Prentice Hall}
}

@article{chang2025dual,
  title={Dual filter: A mathematical framework for inference using transformer-like architectures},
  author={Chang, Heng-Sheng and Mehta, Prashant G},
  journal={arXiv preprint arXiv:2505.00818},
  year={2025}
}

@article{phuong2022formal,
  title={Formal algorithms for transformers},
  author={Phuong, Mary and Hutter, Marcus},
  journal={arXiv preprint arXiv:2207.09238},
  year={2022}
}

@article{akman2026optimal,
  title={An Optimal Control Approach To Transformer Training},
  author={Akman, Ka{\u{g}}an and Sald{\i}, Naci and Y{\"u}ksel, Serdar},
  journal={arXiv preprint arXiv:2603.09571},
  year={2026}
}

@inproceedings{kan2025optimal,
  title={Optimal Control for Transformer Architectures: Enhancing Generalization, Robustness and Efficiency},
  author={Kan, Kelvin and Li, Xingjian and Zhang, Benjamin and Sahai, Tuhin and Osher, Stanley and Katsoulakis, Markos},
  booktitle={The Thirty-ninth Annual Conference on Neural Information Processing Systems}
}

@article{cimetiere2025localmax,
  title={Localmax dynamics for attention in transformers and its asymptotic behavior},
  author={Cimeti{\`e}re, Henri and Chiri, Maria Teresa and Gharesifard, Bahman},
  journal={arXiv preprint arXiv:2509.15958},
  year={2025}
}

@article{krishnamurthy2026llms,
  title={LLMs as High-Dimensional Nonlinear Autoregressive Models with Attention: Training, Alignment and Inference},
  author={Krishnamurthy, Vikram},
  journal={arXiv preprint arXiv:2602.00426},
  year={2026}
}

@inproceedings{goel2024can,
  title={Can a transformer represent a Kalman filter?},
  author={Goel, Gautam and Bartlett, Peter},
  booktitle={6th Annual Learning for Dynamics \& Control Conference},
  pages={1502--1512},
  year={2024},
  organization={PMLR}
}

@article{du2023can,
  title={Can transformers learn optimal filtering for unknown systems?},
  author={Du, Zhe and Balim, Haldun and Oymak, Samet and Ozay, Necmiye},
  journal={IEEE Control Systems Letters},
  volume={7},
  pages={3525--3530},
  year={2023},
  publisher={IEEE}
}

@article{geshkovski2024measure,
  title={Measure-to-measure interpolation using transformers},
  author={Geshkovski, Borjan and Rigollet, Philippe and Ruiz-Balet, Dom{\`e}nec},
  journal={arXiv preprint arXiv:2411.04551},
  year={2024}
}

@article{castin2025unified,
  title={A unified perspective on the dynamics of deep transformers},
  author={Castin, Val{\'e}rie and Ablin, Pierre and Carrillo, Jos{\'e} Antonio and Peyr{\'e}, Gabriel},
  journal={arXiv preprint arXiv:2501.18322},
  year={2025}
}

@article{xie2021explanation,
  title={An explanation of in-context learning as implicit bayesian inference},
  author={Xie, Sang Michael and Raghunathan, Aditi and Liang, Percy and Ma, Tengyu},
  journal={arXiv preprint arXiv:2111.02080},
  year={2021}
}

@article{bai2023transformers,
  title={Transformers as statisticians: Provable in-context learning with in-context algorithm selection},
  author={Bai, Yu and Chen, Fan and Wang, Huan and Xiong, Caiming and Mei, Song},
  journal={Advances in neural information processing systems},
  volume={36},
  pages={57125--57211},
  year={2023}
}

@article{alaa2019attentive,
  title={Attentive state-space modeling of disease progression},
  author={Alaa, Ahmed M and van der Schaar, Mihaela},
  journal={Advances in neural information processing systems},
  volume={32},
  year={2019}
}

@article{tang2021probabilistic,
  title={Probabilistic transformer for time series analysis},
  author={Tang, Binh and Matteson, David S},
  journal={Advances in neural information processing systems},
  volume={34},
  pages={23592--23608},
  year={2021}
}

@book{bar2001estimation,
  title={Estimation with applications to tracking and navigation: theory algorithms and software},
  author={Bar-Shalom, Yaakov and Li, X Rong and Kirubarajan, Thiagalingam},
  year={2001},
  publisher={John Wiley \& Sons}
}

@article{gu2023mamba,
  title={Mamba: Linear-time sequence modeling with selective state spaces},
  author={Gu, Albert and Dao, Tri},
  journal={arXiv preprint arXiv:2312.00752},
  year={2023}
}

@article{kim2025arrow,
  title={The arrow of time in estimation and control: Duality theory beyond the linear Gaussian model},
  author={Kim, Jin Won and Mehta, Prashant G},
  journal={IEEE Control Systems},
  volume={45},
  number={2},
  pages={70--90},
  year={2025},
  publisher={IEEE}
}

@inproceedings{kim2019duality,
	Author = {Kim, Jin Won and Mehta, Prashant G. and Meyn, Sean},
	Booktitle = {2019 IEEE 58th Conference on Decision and Control (CDC)},
	Title = {What is the {Lagrangian} for Nonlinear Filtering?},
	Year = {2019},
	pages = {1607--1614},
	month={12},
	address={Nice, France},
	organization={IEEE}
}

@article{kim2023duality,
  title={Duality for nonlinear filtering II: Optimal control},
  author={Kim, Jin Won and Mehta, Prashant G},
  journal={IEEE Transactions on Automatic Control},
  volume={69},
  number={2},
  pages={712--725},
  year={2023},
  publisher={IEEE}
}

@book{bensoussan2018estimation,
  title={Estimation and control of dynamical systems},
  author={Bensoussan, Alain},
  volume={48},
  year={2018},
  publisher={Springer}
}

@inproceedings{todorov2008general,
  title={General duality between optimal control and estimation},
  author={Todorov, Emanuel},
  booktitle={2008 47th IEEE conference on decision and control},
  pages={4286--4292},
  year={2008},
  organization={IEEE}
}

@software{karpathy2024nanogpt,
  title = {\verb|nanoGPT|: The simplest, fastest repository for training/finetuning medium-sized GPTs.},
  author = {Karpathy, Andrej},
  year = {2024},
  howpublished = {\url{https://github.com/karpathy/nanoGPT}},
}

@inproceedings{chen2026differentiable,
  title={Differentiable Filtering for Learning Hidden Markov Models},
  author={Reginald Zhiyan Chen and Heng-Sheng Chang and Prashant G Mehta},
  booktitle={8th Annual Learning for Dynamics and Control Conference},
  year={2026},
}

@article{abella2024asymptotic,
  title={The Asymptotic Behavior of Attention in Transformers},
  author={Abella, {\'A}lvaro Rodr{\'\i}guez and Silvestre, Jo{\~a}o Pedro and Tabuada, Paulo},
  journal={arXiv preprint arXiv:2412.02682},
  year={2024}
}

@article{adu2024approximate,
  title={Approximate controllability of continuity equation of transformers},
  author={Adu, Daniel Owusu and Gharesifard, Bahman},
  journal={IEEE Control Systems Letters},
  year={2024},
  publisher={IEEE}
}

@article{geshkovski2023mathematical,
  title={A mathematical perspective on transformers},
  author={Geshkovski, Borjan and Letrouit, Cyril and Polyanskiy, Yury and Rigollet, Philippe},
  journal={arXiv preprint arXiv:2312.10794},
  year={2023}
}

@article{rohekar2023causal,
  title={Causal interpretation of self-attention in pre-trained transformers},
  author={Rohekar, Raanan Y and Gurwicz, Yaniv and Nisimov, Shami},
  journal={Advances in Neural Information Processing Systems},
  volume={36},
  pages={31450--31465},
  year={2023}
}

% \end{document}

\appendix

\section{Explicit formulae for the dual optimal control problem in \Sec{sec:docp_nonlinear}}\label{appx:nonlinear}

This section provides explicit expressions for the BS$\Delta$E dual
control system, the
optimal control objective, and the formula for optimal control.
Additional details and background on these can be found
in~\cite{chang2025dual}.  We follow closely the notation of~\cite{chang2025dual}.  Denote
\begin{align*}
  c(x) := \begin{bmatrix} C(x, 1) - C(x, 0) & C(x, 2) - C(x, 0) &
    \hdots & C(x, m) - C(x, 0) \end{bmatrix}^\transpose, \; \ x \in \bS.
\end{align*}

\paragraph{Dual Control System.}~The dual control system is a backward stochastic difference equation
(BS$\Delta$E)  as follows:
\begin{subequations}\label{eq:dual_BSDE}
  \begin{align}
    Y_t(x) &= (AY_{t+1})(x) + c^\transpose(x) (U_t + V_t(x)) - V_t^\transpose(x)
            e(Z_{t+1}),~x\in\bS,~t=0,1,\hdots,T-1, \label{eq:dual_BSDE_a}\\
    Y_T(x)  & = f(x),\quad x\in\bS.  \label{eq:dual_BSDE_b}
  \end{align}
\end{subequations}
Here $f:\bS\to\Re$ is the terminal condition and $U:=\{U_t:0\leq
t\leq T-1\} \in \clU$ is the control input (these are both given).  The BS$\Delta$E is
solved to obtain the solution pair
\begin{align*}
  Y&:=\{Y_t(x): Y_t(x)\in\clZ_t,~x\in\bS,~0\leq t\leq T\},\\
  V&:=\{V_t(x): V_t(x)\in\clZ_t,~x\in\bS,~0\leq t\leq T-1\},
\end{align*}
where for each fixed $x\in\bS$, $Y_t(x)$ is real-valued and $V_t(x)$ is $\Re^m$-valued.
The solution pair is denoted by $(Y,V)$.  Its existence and uniqueness
is shown in~\citep[{Prop.~8}]{chang2025dual}. 
\paragraph{Optimal Control Objective.}~The optimal control objective
is defined as follows:
\begin{align*}
  \sJ_T(U;f)  := \dvar (Y_0(X_0)) + \E \Big( \sum_{t=0}^{T-1} l (Y_{t+1},V_t,U_t\,;X_t)  \Big),
  \label{eq:nonlinear-objective}
\end{align*}
where $\dvar(Y_0(X_0))=\E(|Y_0(X_0) - \mu(Y_0)|^2) =
\mu(Y_0^2)-\mu(Y_0)^2$ (note that $Y_0$ is a deterministic function),
and the running cost $l:\Re^d\times\Re^{m\times
  d}\times\Re^m\times\bS\to\Re$ is given by, 
\begin{equation*}\label{eq:running_cost_formula}
  l(y,v,u;x):= (\Gamma y)(x) + (u+v(x))^\transpose R(x) (u+v(x)),\quad y\in\Re^d,\;v\in\Re^{m\times d},\;u\in\Re^m,\;x\in\bS.
\end{equation*}
where  \begin{align*}
    (\Gamma f)(x) &:= \sum_{y\in\bS} A(x,y) f^2(y) - (Af)^2(x),\quad x\in\bS, \\
    R(x) &:=  \text{diag}(c(x)) + C(x,0) (I+\ones \ones^\transpose)  - c(x) c^\transpose (x), \quad x\in\bS,
  \end{align*}
and $v(x)$ is the $x$-th column vector of the $m\times d$ matrix
$v=\begin{bmatrix}v(1) & \hdots & v(x) & \hdots &
  v(d) \end{bmatrix}_{m\times d}$.

\paragraph{Optimal control formula.}  The explicit expression for the
function $\phi$ introduced in~\eqref{eq:optimal_control_formula} is 
\begin{equation*}
  \phi(y,v;\rho) := - \rho(R)^{\dagger} \left( \rho ((c-\rho (c))y)- \rho (Rv)\right), \quad y\in\Re^d, \; v\in \Re^{m\times d},\; \rho\in\clP(\bS).
  \label{eq:control-nonlinear}
\end{equation*}
Here, $\rho(R)^{\dagger}$ denotes the pseudo-inverse of the $m\times m$ matrix $\rho(R):=\sum_{x\in\bS} \rho(x) R(x)$ and the other two terms are $\rho ((c-\rho (c))y) := \sum_{x\in\bS} \rho(x) (c(x) - \rho(c)) y(x)$ and $\rho (Rv):= \sum_{x\in\bS} \rho(x) R(x)v(x)$ which are $m\times 1$ vectors.

The formula is used to compute the attention weights described in the
numerical experiments.  

\section{Proofs}

\subsection{Proof of the \Prop{prop:duality_LG} (duality principle for non-Markovian linear Gaussian model)}
\label{appx:proof-duality-linear}

First, using the observation process \eqref{eq:linear-observation}, state process equation \eqref{eq:linear-process} and the backward process (dual control system) \eqref{eq:linear-dual-backward}, we compute
\begin{equation*}
  \dualstate_\horizon^\transpose\stateProcess_\horizon = \dualstate_\initial^\transpose\stateProcess_\initial - \sum_{t=0}^{\horizon-1}\control_t^\transpose\observationProcess_{t+1} + \sum_{t=0}^{\horizon-1}\control_t^\transpose\observationNoise_{t+1} + \sum_{t=1}^{\horizon}\dualstate_{t}^\transpose\processNoise_t
\end{equation*}

Now, noting the terminal condition of the dual state $\dualstate_\horizon=f$, and the definition of estimator $S_\horizon$ in \eqref{eq:linear-estimator}, we get
\begin{equation}\label{eq:pairing-identity}
  f^\transpose \stateProcess_{\horizon} - S_\horizon = \dualstate_\initial^\transpose(\stateProcess_\initial - \initialMean) + \sum_{t=1}^{\horizon}\dualstate_{t}^\transpose\processNoise_{t} + \sum_{t=0}^{\horizon-1}\control_t^\transpose\observationNoise_{t+1}
\end{equation}

Next, we square this equation and take expectations. Since $\stateProcess_\initial-\initialMean\sim\Gaussian(0,\initialCovariance)$, $\processNoise_t\sim\Gaussian(0,\covarianceProcessNoise)$, $\observationNoise_t\sim\Gaussian(0,\covarianceObservationNoise)$, and $\stateProcess_\initial,\processNoise,\observationNoise$ are mutually independent, all three terms on the right hand side of~\eqref{eq:pairing-identity} are zero-mean and mutually uncorrelated.
Therefore,
\begin{equation*}
  \Expectation{\abs{f^\transpose\stateProcess_{\horizon}-S_\horizon}^2} = \abs{\dualstate_\initial}^2_{\initialCovariance} + \sum_{t=1}^{\horizon}\abs{\dualstate_{t}}^2_{\covarianceProcessNoise} + \sum_{t=0}^{\horizon-1}\abs{\control_t}^2_{\covarianceObservationNoise} = 2 \sJ_T(u;f)
\end{equation*}
and this concludes the proof.
\hfill$\square$

\subsection{Proof of \Thm{Thm:OCP_LG} (solution of the linear optimal control problem)}
\label{appx:proof-optimal-control-linear}

Let $\control^{\text{(opt)}}$ denote the optimal control and consider an arbitrary perturbation $\Delta\control$. The perturbed control trajectory can be written as $\control = \control^{\text{(opt)}} + \Delta\control$. Since the dual control system~\eqref{eq:linear-dual-backward} is linear, the resulting dual state $\dualstate$ will be of the form $\dualstate = \dualstate^{\text{(opt)}} + \Delta\dualstate$. Here, $\dualstate^{\text{(opt)}}$ and $\Delta\dualstate$ are the unique solutions to the dual dynamics corresponding to $\control^{\text{(opt)}}$ (with terminal condition $\dualstate^{\text{(opt)}}_\horizon = f$) and $\Delta\control$ (with terminal condition $\Delta\dualstate_\horizon = 0$), respectively. Now, the cost functional $\dualCost_\horizon(\control; f)$ can be expanded around $\control^{\text{(opt)}}$ as follows:
\begin{equation*}
  \dualCost_\horizon(\control; f) = \dualCost_\horizon(\control^{\text{(opt)}}; f) + \mathcal{I}_\horizon(\control^{\text{(opt)}}, \Delta\control) + \dualCost_\horizon(\Delta\control; 0)
\end{equation*}

where $\dualCost_\horizon(\control^{\text{(opt)}}; f)$ represents the minimum cost, $\dualCost_\horizon(\Delta\control; 0)$ is the second-order error term (always non-negative due to convexity), and the first-order variation (cross terms) is given by:
\begin{align*}
  \mathcal{I}_\horizon(\control^{\text{(opt)}}, \Delta\control) &= (\dualstate_\initial^{\text{(opt)}})^\transpose\initialCovariance(\Delta\dualstate_\initial) + \sum_{t=1}^{\horizon}(\dualstate_{t}^{\text{(opt)}})^\transpose\covarianceProcessNoise(\Delta\dualstate_{t}) + \sum_{t=\initial}^{\horizon-1}(\control_t^{\text{(opt)}})^\transpose\covarianceObservationNoise(\Delta\control_t)
\end{align*}

Using the definition of the forward momentum process $\eta$ from~\eqref{eq:linear-fb-forward} and the dual control system dynamics for $\Delta\dualstate$, we can simplify the first-order variation into the following inner product form:
\begin{equation*}
  \mathcal{I}_\horizon(\control^{\text{(opt)}}, \Delta\control) = \sum_{t=\initial}^{\horizon-1} \left( \observationModel \eta_t + \covarianceObservationNoise \control^{\text{(opt)}}_{t} \right)^\transpose (\Delta\control_t)
\end{equation*}

The optimality of $\control^{\text{(opt)}}$ requires that the first-order variation $\mathcal{I}_\horizon(\control^{\text{(opt)}}, \Delta\control)$ must vanish for any arbitrary perturbation $\Delta\control$. This implies $\control^{\text{(opt)}}_{t} = - \covarianceObservationNoise^{-1} \observationModel \eta_t$ for all $t \in \set{\initial, \dots, \horizon-1}$, which gives us the optimal control formula~\eqref{eq:linear-fb-control}. Also, the strict convexity of the cost functional with respect to $\control$ ensures that this stationary point is the unique global minimizer. This completes the proof. \hfill$\square$

\subsection{Proof of \Prop{prop:existence_nonlin_predictor_rep} (well-posedness of the nonlinear predictor representation)}
\label{appx:well-posedness-nonlin-predictor-rep}

The existence theorem relies on the following proposition from
linear algebra.

\begin{proposition}[{See~\cite[{Prop.~28}]{chang2025dual}}]\label{prop:linear_algebra}
  Let $s:\bO\to \Re$. Then there exists unique $(s,\tilde{s}) \in \Re
  \times \Re^m$ such that the following decomposition holds:
  \[
  s(z) = \bar{s} +\tilde{s}^\tp e(z) ,\quad z\in\bO.
\]
Explicitly, 
\[
\bar{s}:= \frac{1}{m+1} \sum_{z\in\bO} s(z),\quad \text{and} \quad 
\tilde{s} (i) = (s(i) - \bar{s}),\quad
i=1,2,\hdots,m.
\]
\end{proposition}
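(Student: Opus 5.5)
The plan is to treat this as a statement about a linear map between two spaces of the same dimension, and to prove existence by checking the explicit formula directly and uniqueness by a summation argument. Specifically, consider the linear map $L:\Re\times\Re^m\to\Re^{\bO}$ given by $L(\bar{s},\tilde{s})(z) := \bar{s}+\tilde{s}^\tp e(z)$ for $z\in\bO$. Both the domain and the codomain have dimension $m+1$. So it suffices to show that $L$ is injective, or alternatively surjective. I will do both explicitly, since that also yields the stated formulas.

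\emph{Existence.} Define $\bar{s}$ and $\tilde{s}$ as in the statement, and verify the decomposition pointwise, splitting into two cases.
\begin{itemize}
\item For $z=i\in\{1,\hdots,m\}$, the vector $e(i)$ is the $i$-th one-hot vector. Hence $\bar{s}+\tilde{s}^\tp e(i)=\bar{s}+(s(i)-\bar{s})=s(i)$.
\item For $z=0$, use $e(0)=-\sum_{i=1}^m e(i)$. This gives
\[
\bar{s}+\tilde{s}^\tp e(0)=\bar{s}-\sum_{i=1}^m\bigl(s(i)-\bar{s}\bigr)=(m+1)\bar{s}-\sum_{i=1}^m s(i)=s(0).
\]
The last equality uses only the definition $(m+1)\bar{s}=\sum_{z\in\bO}s(z)$.
\end{itemize}

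\emph{Uniqueness.} Suppose $s(z)=\bar{s}+\tilde{s}^\tp e(z)$ holds for all $z\in\bO$. Summing over $z\in\bO$ and using the defining property $\sum_{z\in\bO}e(z)=0$ gives $\sum_{z}s(z)=(m+1)\bar{s}$, so $\bar{s}$ is forced to be the average. Evaluating the decomposition at $z=i\in\{1,\hdots,m\}$ then gives $\tilde{s}(i)=s(i)-\bar{s}$, so $\tilde{s}$ is forced as well. Hence any decomposition coincides with the explicit one.

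There is no real obstacle in this argument. The only step needing care is the $z=0$ case of the existence check, where one must use the particular choice $e(0)=-\sum_i e(i)$ together with the normalization $1/(m+1)$ in $\bar{s}$. This same choice is what makes $\sum_z e(z)=0$, and that identity is the key to uniqueness.
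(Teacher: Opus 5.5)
Your proof is correct and complete. Existence follows by direct substitution, where the only nontrivial case, $z=0$, uses $e(0)=-\sum_{i=1}^m e(i)$ together with $(m+1)\bar{s}=\sum_{z\in\bO}s(z)$. Uniqueness follows by summing over $\bO$ and using $\sum_{z\in\bO}e(z)=0$, which fixes $\bar{s}$ and then $\tilde{s}$.

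The paper gives no proof of its own here and simply defers to \cite[Prop.~28]{chang2025dual}, so your direct verification is the natural self-contained argument. You also rightly read the statement's ``$(s,\tilde{s})$'' as the intended $(\bar{s},\tilde{s})$.
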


\begin{example}[m=1] Let $s:\{0,1\} \to \Re$.  Denote
$
   s^+ := s(1)$ and $s^{-} := s(0)
 $. Then
 \[
 s(z) = \bar{s} +  \tilde{s} \, e(z),\quad z\in\{0,1\},
\]
where $\bar{s}:=0.5 (s^+ + s^-)$ and $\tilde{s}:= 0.5 (s^+ -
  s^-)$ (recall $e(1)=1$ and $e(0)=-1$).
\end{example}

We now provide a proof of 
\Prop{prop:existence_nonlin_predictor_rep}, establishing 
well-posedness of the nonlinear predictor representation in~\Prop{eq:nonlin_predictor_rep}. 
The proof follows the analogous result for the HMM 
in~\cite{chang2025dual}.

\begin{proof}[Proof of \Prop{prop:existence_nonlin_predictor_rep}]
Let $S_T$  be any $\clZ_T$-measurable random variable.  From Doob-Dynkin lemma, there exists a
deterministic function $s:\bO^T \to \Re$ such that
\[
S_T = s(Z_1,\hdots,Z_{T-1},Z_T).
\]
Set $
S(z) := s(Z_1,\hdots,Z_{T-1},z)$, for  $z\in\bO$. 
Using~\Prop{prop:linear_algebra},
\[
S_T = S(Z_T) = S_{T-1} - (U_{T-1})^\tp e(Z_T),
\]
where
\[
  S_{T-1} = \frac{1}{m+1} \sum_{z\in \bO} S(z),\quad 
 U_{T-1}(i):= - (S(i) - S_{T-1}),\quad
i=1,2,\hdots,m.
\]
Uniqueness is from the uniqueness of the decomposition. The proof is completed through induction by
repeating the procedure for $S_{T-1}\in\clZ_{T-1}$.

Now, the conditional expectation
is meaningfully defined only for sample paths $Z=z$ with 
$\sP([Z=z])>0$.  Note here that because $|\bO|=m+1$ and $T$ are
both finite, there are only finitely many---specifically $(m+1)^T$---sample paths.
Thus, $\sP([Z=z])$ is a well-defined object for each sample path, 
although it may be zero depending on the model properties.

There are two ways to address this issue: 
\begin{enumerate}
\item Assume 
  $\sP([Z=z])\geq\underline{c}^T > 0$ for all $z\in\bO^T$,
and the existence of a unique $U$ follows directly from the earlier result.
\item Adopt the convention $\frac{0}{0}=0$ to define (or extend) the
  conditional expectation for sample 
paths $Z=z$ with $\sP([Z=z])=0$. Then again, a particular
selection of $U$ follows from the above result.
\end{enumerate}
In the second case, however, there
may be other choices of $U$ such that the
representation~\eqref{eq:nonlin_predictor_rep} holds: Any two choices
will yield a representation that coincides on the set $\{z\in \bO^{T}:
\sP(Z=z)>0\}$ but may differ on the set $\{z\in \bO^{T}:
\sP(Z=z)=0\}$.
\end{proof}

\begin{example}[m=1] 
Set $
S_T^{+} = s(Z_1,\hdots,Z_{T-1},1)$ and $S_T^{-} =
s(Z_1,\hdots,Z_{T-1},0)$. 
Then $S_T^{+}, S_T^{-}\in \clZ_{T-1}$ and
\[
S_T = S_{T-1} - U_{T-1} e(Z_T),
\]
where $
S_{T-1}=0.5\left( S_T^{+} + S_{T}^-\right)$ and $U_{T-1}= -0.5\left( S_T^{+} - S_{T}^-\right)$. 
\end{example}

\section{Numerical Experiments}
\label{appx:numerics}

\subsection{Training Details for nanoGPT}
\label{appx:nanogpt-training}

We train nanoGPT~\citep{karpathy2024nanogpt}, a lightweight GPT implementation, on synthetic datasets generated from HMMs.
Training is performed using the AdamW optimizer with learning rate $10^{-3}$ and parameters $\beta_1=0.9$ and $\beta_2=0.99$ for $40,000$ iterations. 
To accelerate convergence, the first $4,000$ iterations are used for learning-rate warmup, after which the learning rate remains constant. 
The dropout rate is set to $10^{-2}$.
Representative loss curves for $d=16$ and $d=128$ are shown in Figure~\ref{fig:loss}, demonstrating that nanoGPT converges to the optimal filtering loss in both settings (the embedding dimension $\hat{d}=d$ in both cases).  
These training hyperparameters are used for all nanoGPT experiments in this work.

% We have trained nanoGPT~\citep{karpathy2024nanogpt}, a miniature version of GPT (generative Pre-trained Transformers) on synthetic datasets generated from Hidden Markov Models. 
% To keep things simple, we have only used one attention head and one layer of attention followed by a feedforward layer and subsequent layer normalization. 
% The dropout is set to a value of $10^{-2}$. We have used AdamW optimizer with a learning rate of $10^{-3}$ and $\beta_1 = 0.9, \beta_2 = 0.99$ and trained the model for $40,000$ iterations. 
% Further, to speed up convergence, we have used initial $4,000$ iterations as warmup iterations following which the learning rate is kept constant. Sample loss curves can be found in Figure~\ref{fig:loss} for $d = \hat{d} = 16$ and $d = \hat{d} = 128$ from which we can observe that nanoGPT converges in loss to the optimal filter loss for both the cases. 
% The training settings have been kept the same for all the nanoGPT experiments performed in this work.

\begin{figure}[t]
  \centering
  \includegraphics[width=\linewidth]{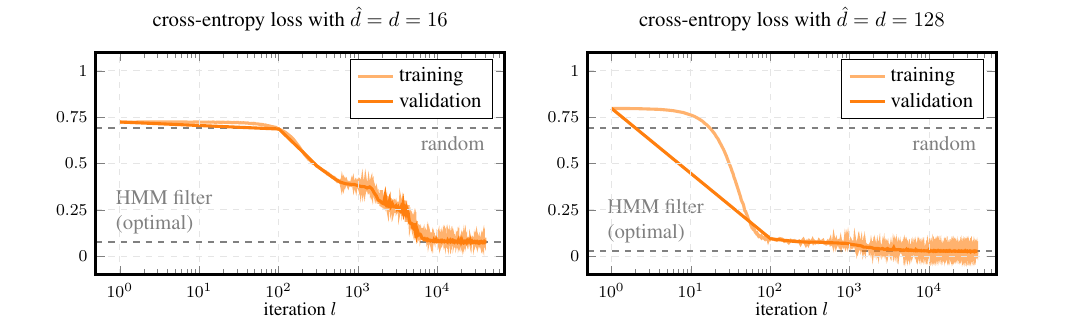}
  \caption{%
    Training and validation loss curves for the transformer model. 
    The loss converges to the optimal filter loss in both settings of $d = \hat{d} = 16$ (left) and $d = \hat{d} = 128$ (right).
    The colored solid lines represent the training (light) and validation (dark) loss curves, while the dashed lines indicate the optimal filter loss (dashed line) and the loss of uniformly random guessing (dashed line).
  }
  \label{fig:loss}
\end{figure}

\subsection{Model perturbation study}
\label{appx:robust}

In this experiment, we investigate the robustness of the dual filter control weights and nanoGPT attention weights under model perturbations. 
The model parameters are perturbed via a convex combination of the nominal parameters and a uniform distribution, where the perturbation level $\varepsilon \in [0,1]$ determines the deviation from the nominal model. 
We consider perturbations to either the transition matrix or the emission matrix (but not both together). The transition matrix perturbation are as follows
\begin{subequations}
  \label{eq:perturbed-model}
\begin{equation}
  \label{eq:perturbed-transition-model}
  A(x, x') = (1 - \varepsilon) A_{\text{nominal}}(x, x') + \frac{\varepsilon}{d}, \;\; x, x' \in \bS,
\end{equation}
and the emission matrix is perturbed as follows
\begin{equation}
  \label{eq:perturbed-emission-model}
  C(x, z) = (1 - \varepsilon) C_{\text{nominal}}(x, z) + \frac{\varepsilon}{m+1} , \; \; x \in \bS,\;\; z\in\bO.
\end{equation}
The nominal model is illustrated in \Fig{fig:model-observation}. 
In~\eqref{eq:perturbed-transition-model} and~\eqref{eq:perturbed-emission-model}, the second terms correspond to uniform distributions added to the respective model parameters.
\end{subequations}
As $\varepsilon$ increases, the model deviates further from the nominal, smoothing the deterministic cycle structure, allowing us to assess how the control and attention patterns change.  \Fig{fig:robust} depicts the control and attention patterns under three representative perturbation levels: $\varepsilon=0.01$, $\varepsilon=0.1$, and $\varepsilon=0.2$.
% Figure~\ref{fig:robust-emission} depicts the same for perturbation of the emission matrix using~\eqref{eq:perturbed-emission-model}. 
The nanoGPT is trained separately for each perturbation level to ensure that the attention patterns are learned under the corresponding data generated from the perturbed model conditions. 

The results of this study indicate that the structure of the optimal control weights are sensitive to model perturbations. 
Even small perturbations to the state transition parameter $A$ cause the dual filter's control weights to shift from a vertical, event-driven pattern to a sparse but diagonal one. 
This diagonal structure indicates that the filter is aggregating evidence across multiple past cycles rather than relying solely on a single branching moment.  
In contrast, the attention weights of nanoGPT remain sparse and vertical across these perturbations. 
Despite this structural difference, the transformer demonstrates inherent robustness by maintaining near-optimal loss that closely matches the dual filter.
On the other hand, perturbations to the emission parameter $C$ have less impact on the weights, which remain focused on the informative time steps where $Z_t=1$ across all perturbation levels, albeit with increasing diffuseness as $\varepsilon$ grows.

% We can note a few things about the obtained results. The attentions for all perturbation levels remain focused on the informative time steps where $Z_t=1$ across all perturbation levels, albeit with increasing diffuseness as $\varepsilon$ grows, indicating a graceful degradation in attention precision. In contrast, the dual filter's control weights shift from a sparse, event-driven pattern to a more diffuse, periodic pattern that reflects the underlying cycle structure of the data rather than the specific informative observations even with a tiny perturbation ($\varepsilon=0.01$). However, as the perturbation level increases, the control weights still capture the temporal periodicity of the cycles, albeit in a less precise manner, indicating that while the dual filter is sensitive to model accuracy, it still retains some ability to exploit coarse-grained temporal regularities in the data under perturbation. This experiment highlights the inherent robustness of the transformer's attention mechanism to model misspecification, as it continues to identify salient observations even under significant perturbation, while the dual filter's control strategy changes rapidly, shifting from precise event localization to a pattern that captures broader temporal structures rather than specific informative events.

\begin{figure}[t]
  \centering
  \begin{minipage}{0.49\linewidth}
    \centering
    \includegraphics[width=\linewidth]{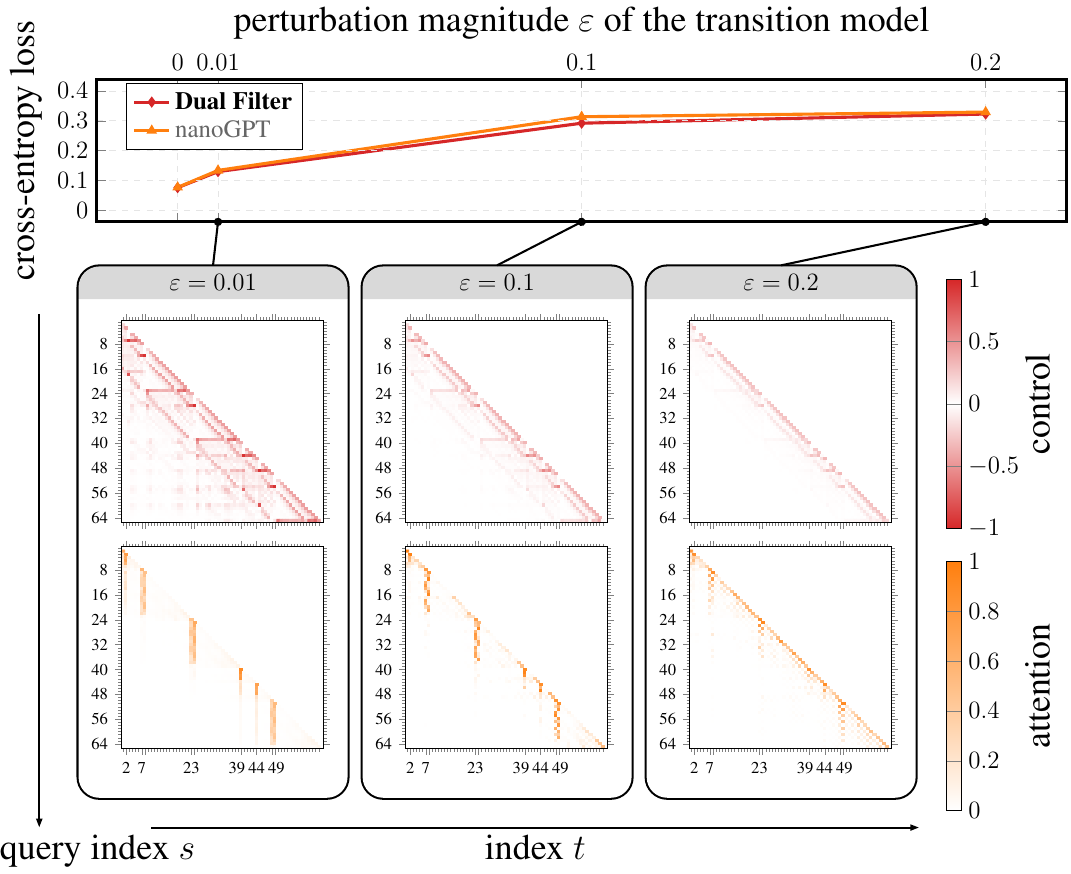}
  \end{minipage}\hfill
  \begin{minipage}{0.49\linewidth}
    \centering
    \includegraphics[width=\linewidth]{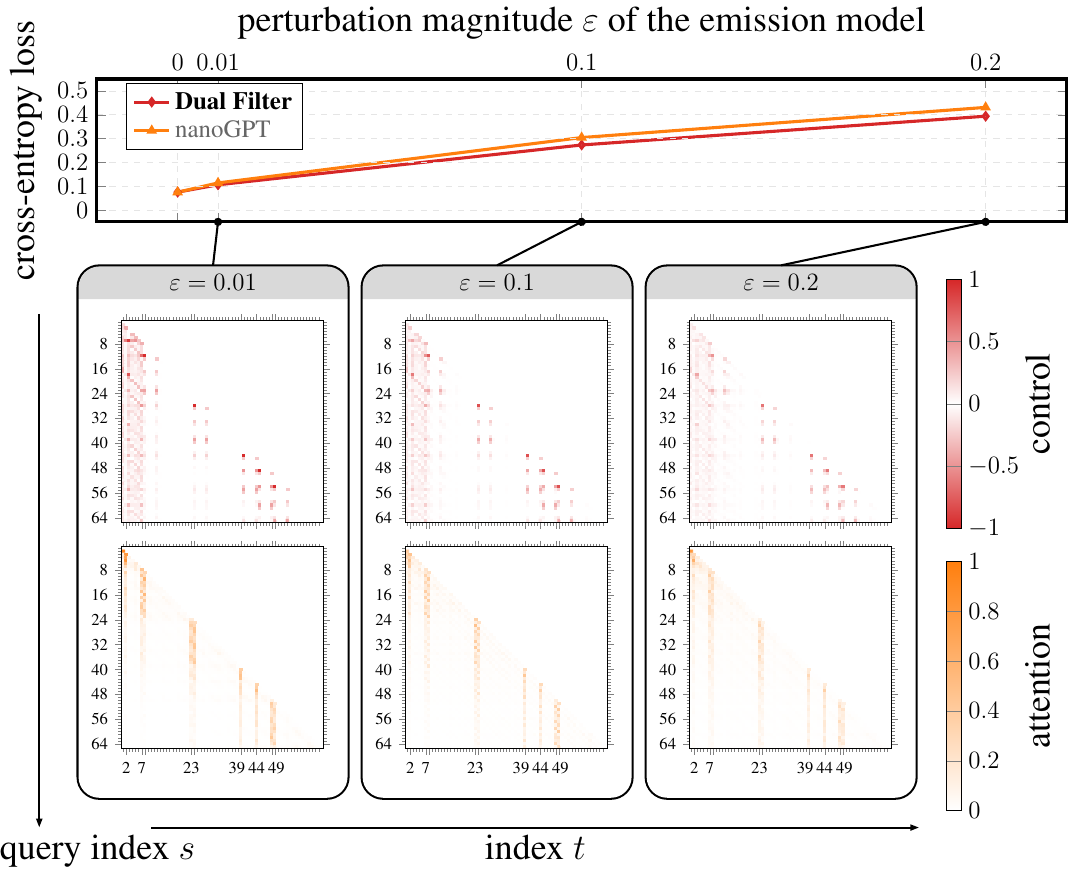}
  \end{minipage}
  \caption{Control and attention patterns shift under model perturbation. 
    The model parameters are perturbed via a convex combination of the nominal parameters and a uniform distributed probability matrix, with a perturbation level of $\varepsilon\in[0,1]$ following \eqref{eq:perturbed-model}.
    The perturbation from left to right is $\varepsilon=0.01$, $\varepsilon=0.1$, and $\varepsilon=0.2$, and the perturbation of the transition matrix is shown on the left while the perturbation of the emission matrix is shown on the right.
    The cross-entropy loss of the learned nanoGPT remains near-optimal across all perturbation levels, closely matching the dual filter's loss, indicating that the transformer maintains robust performance even as the data generated from the perturbed model.
    The top rows show the dual filter control weights computed with the perturbed model parameters, while the bottom rows show the attention matrix from a trained transformer under the same model perturbation.
    The major ticks in the index axis of the heatmaps correspond to the time steps where $Z_t=1$; the minor ticks correspond to the time steps where $Z_t=0$.
    }
  \label{fig:robust}
\end{figure}

% \begin{figure}[t]
%   \centering
%   \includegraphics[width=\linewidth]{figures/robust-emission.pdf}
%   \caption{Control and attention patterns under model perturbation. 
%     The model parameters are perturbed via a convex combination of the exact parameters and an uniform distributed transition matrix, with a perturbation level of $\varepsilon\in[0,1]$.
%     The perturbation from left to right is $\varepsilon=0.01$, $\varepsilon=0.1$, and $\varepsilon=0.2$.
%     (top) Dual filter control weights $u_t$ computed with perturbed model parameters: the control pattern ...
%     (bottom) Attention matrix $a_{T,t}$ from a trained transformer under model perturbation: the attention remains focused on time steps where $Z_t=1$, but the attention weights become more diffuse as the perturbation level increases.}
%   \label{fig:robust-emission}
% \end{figure}

\subsection{A higher-dimensional system}
\label{appdx:high_dim}

The two-cycle HMM from \Sec{sec:numerics} is used with $d = 128$, $T = 256$, and $q=4$.  A nanoGPT is trained on this HMM data, consistent with the training details in \Sec{appx:nanogpt-training}. \Fig{fig:high-dim} depicts a sample observation trajectory, and the corresponding control and attention weights.  Similar to the lower-dimensional case in the main text, the controls and attentions exhibit a sparse pattern that is non-zero at the time indices when $1$s are observed in the data. An interesting observation is that the controls in this case also show a periodic pattern that is not observed in the lower-dimensional case.

\begin{figure}[t]
  \centering
  \includegraphics[width=\linewidth]{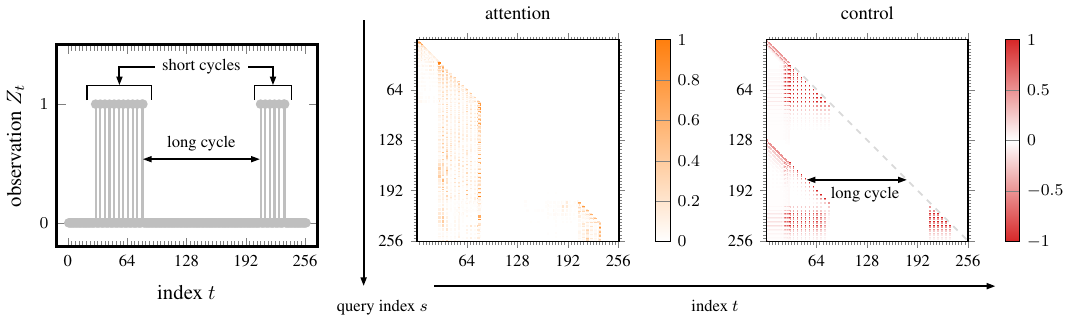}
  \caption{
    A sample observation trajectory and the corresponding control and attention patterns for the higher-dimensional system.
    The figure on the left shows a sample observation trajectory, where it shows both the shorter cycle and the longer cycle.
    The middle figure shows the attention weights from the trained nanoGPT, and the right figure shows the control weights from the dual filter.
    Both the attention and control weights exhibit a sparse pattern that is non-zero at the time indices when $1$s are observed in the data. 
    Additionally, the control weights also show a periodic pattern that reflects the underlying longer cycle structure of the data.
    The gray dashed line indicates the diagonal in the control heatmap.
  }
  \label{fig:high-dim}
\end{figure}

% \newpage
% \input{checklist-complete.tex}

\end{document}